\definecolor{citecolor}{RGB}{34,139,34}
\definecolor{mydarkblue}{rgb}{0,0.08,1}
\definecolor{mydarkgreen}{rgb}{0.02,0.6,0.02}
\definecolor{mydarkred}{rgb}{0.8,0.02,0.02}
\definecolor{mydarkorange}{rgb}{0.40,0.2,0.02}
\definecolor{mypurple}{RGB}{111,0,255}
\definecolor{myred}{rgb}{1.0,0.0,0.0}
\definecolor{mygold}{rgb}{0.75,0.6,0.12}
\definecolor{myblue}{rgb}{0,0.2,0.8}
\definecolor{mydarkgray}{rgb}{0.,0.2,0.2}
\definecolor{lightred}{RGB}{255,235,235}
\definecolor{lightgreen}{RGB}{235,255,235}
\definecolor{lightblue}{RGB}{235,235,255}
\definecolor{lightcyan}{RGB}{235,255,255}
\definecolor{lightmagenta}{RGB}{255,235,255}
\definecolor{lightyellow}{RGB}{255,255,235}
\definecolor{qxkcolor}{RGB}{215,235,255}
\definecolor{softmaxcolor}{RGB}{230,235,255}
\definecolor{probxvcolor}{RGB}{255,255,235}
\definecolor{topkcolor}{RGB}{255,235,235}
\definecolor{zecolor}{RGB}{255,255,235}
\definecolor{dynacolor}{RGB}{235,255,255}
\definecolor{reviewcolor}{RGB}{0,0,200}
\newcommand{\ceil}[1]{\lceil #1 \rceil}
\newcommand{\pfrac}[2]{\frac{\partial #1}{\partial #2}}
\newcommand{\calL}{\mathcal{L}}
\newcommand{\calP}{\mathcal{P}}
\newcommand{\calS}{\mathcal{S}}
\DeclareMathOperator*{\argmin}{argmin}
\theoremstyle{plain}
\newtheorem{myclaim}{\textbf{Claim}}
\theoremstyle{definition}
\newcommand{\name}{\texttt{L}$^\texttt{2}$\texttt{ight}\xspace}
\begin{document}

\pagestyle{plain} %

\title{
L$^\text{2}$ight: Enabling On-Chip Learning\\for Optical Neural Networks via Efficient\\ \textit{in-situ} Subspace Optimization
}

\author
{
Jiaqi Gu,
Hanqing Zhu,
Chenghao Feng,
Zixuan Jiang,
Ray T. Chen,
David Z. Pan
\\
ECE Department, University of Texas at Austin\\
\small\textit{\{jqgu, hqzhu,fengchenghao1996,zixuan\}@utexas.edu},
\small\textit{\{chen, dpan\}@ece.utexas.edu}
}

\maketitle
\begin{abstract}
\label{abstract}
Silicon-photonics-based optical neural network (ONN) is a promising hardware platform that could represent a paradigm shift in efficient AI with its CMOS-compatibility, flexibility, ultra-low execution latency, and high energy efficiency.
\emph{In-situ} training on the online programmable photonic chips is appealing but still encounters challenging issues in on-chip implementability, scalability, and efficiency.
In this work, we propose a closed-loop ONN on-chip learning framework \name to enable scalable ONN mapping and efficient \textit{in-situ} learning.
\name adopts a three-stage learning flow that first calibrates the complicated photonic circuit states under challenging physical constraints, then performs photonic core mapping via combined analytical solving and zeroth-order optimization.
A subspace learning procedure with multi-level sparsity is integrated into \name to enable \textit{in-situ} gradient evaluation and fast adaptation, unleashing the power of optics for real on-chip intelligence.
Extensive experiments demonstrate our proposed \name outperforms prior ONN training protocols with \textbf{3-order-of-magnitude} higher scalability and over \textbf{30}$\times$ better efficiency, when benchmarked on various models and learning tasks.
This synergistic framework is the \emph{first} scalable on-chip learning solution that pushes this emerging field from \emph{intractable} to \emph{scalable} and further to \emph{efficient} for next-generation self-learnable photonic neural chips.
From a co-design perspective, \name also provides essential insights for hardware-restricted unitary subspace optimization and efficient sparse training.
We open-source our framework at \href{https://github.com/JeremieMelo/L2ight}{link}.
\end{abstract}

\section{Introduction}
\label{sec:Introduction}
The escalating scales of deep learning models and datasets have brought increased demand for computing capacities in electronic processors.
Stringent performance and efficiency constraints in practical applications raise a surging need to develop more efficient computing solutions.
As a promising substitute for conventional electronics, optical neural networks (ONNs) have attracted extensive research interests owing to their sub-nanosecond latency and attojoule/multiply-accumulate operation (MAC) energy efficiency~\cite{NP_NATURE2017_Shen, NP_PIEEE2020_Cheng,NP_Nature2020_Wetzstein, NP_NaturePhotonics2021_Shastri,NP_Nature2021_Xu,NP_Nature2021_Feldmann}, shown in Figure~\ref{fig:ONNEfficiency}.

However, robustness and trainability are still critical issues for photonic AI engines~\cite{NP_ICCAD2019_Zhao, NP_DATE2020_Gu, NP_ICCAD2020_Zhu}.
Due to the analog computing nature of ONNs, the photonic DNN model inevitably suffer from performance degradation or even complete malfunction~\cite{NP_ICCAD2019_Zhao,NP_ICCAD2020_Zhu} with the existence of manufacturing errors, non-ideal device controls, and undesired circuit noises, shown in Figure~\ref{fig:NoiseSensitivity}.
Though non-ideal effects can be simulated and considered during software training~\cite{NP_ICCAD2019_Zhao,NP_DATE2020_Gu} to improve noise tolerance, the variation simulation is physically inaccurate (especially with unknown process variations) and prohibitively expensive, shown in Figure~\ref{fig:SimulationRuntime}.

\begin{wrapfigure}[13]{r}{0.72\textwidth}
\begin{minipage}{0.72\textwidth}
\vspace{-10pt}
    \centering
     \subfloat[]{
    \includegraphics[width=0.325\textwidth]{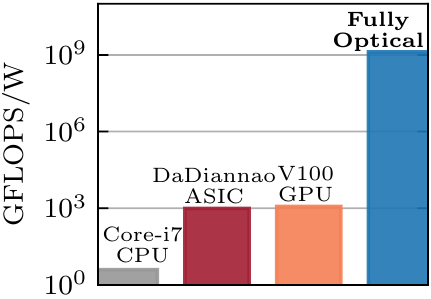}
    \label{fig:ONNEfficiency}
    }
    \hspace{-9pt}
    \subfloat[]{
    \includegraphics[width=0.315\textwidth]{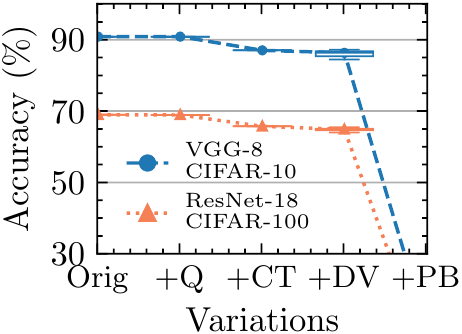}
    \label{fig:NoiseSensitivity}
    }
    \hspace{-8pt}
    \subfloat[]{
    \includegraphics[width=0.333\textwidth]{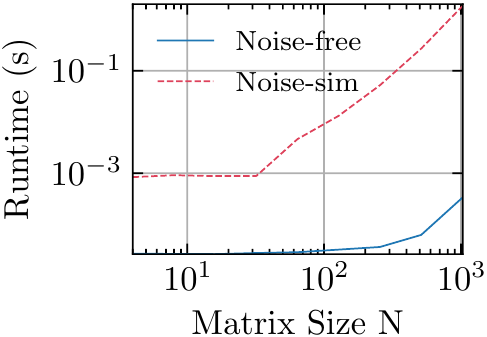}
    \label{fig:SimulationRuntime}
    }
    \caption{\small Comprehensive motivations. (a) Computational efficiency superiority of ONNs~\cite{NP_NATURE2017_Shen}. 
    (b) Noise sensitivity of ONNs (Q: 8-bit quantization, CT: crosstalk, DV: device variation, PB: phase bias).
    (c) Runtime of noise-free matrix multiplication vs. w/ noise simulation (Q+CT+DV).}
    \label{fig:Motivation}
\end{minipage}
\end{wrapfigure} 
Recently, on-device training has become an appealing trend towards adaptable and self-learning ONNs.
However, training on photonic neural chips is non-trivial and much less explored than on conventional platforms.
Prior work~\cite{NP_OE2019_Zhang, NP_OPTICA2018_Hughes, NP_DAC2020_Gu, NP_AAAI2021_Gu} only demonstrated small prototypes, and their scalability and efficiency are rather limited.
To push the limits of DNNs in optics, we propose an efficient three-stage learning framework \name that consists of variation-agnostic identity calibration, alternate projection-based parallel mapping, and multi-level sparse subspace learning.
The main contributions of this work are four-fold,
\begin{itemize}[leftmargin=*]
\setlength{\itemindent}{0.5em}
    \item \textbf{Scalability}. \emph{For the first time}, an ONN learning protocol can scale up to million-level parameters under practical circuit non-ideality, over 3-order-of-magnitude more scalable than prior arts.
    \item \textbf{Efficiency}. We explore multi-level sparsity in \emph{in-situ} gradient evaluation to trim down unnecessary on-chip training energy and runtime cost.
    \item \textbf{Learnability}. By trading redundant representability, our restricted subspace optimization can provide ONNs with enough adaptability for on-device self-learning and task transfer.
    \item \textbf{Robustness}. Various practical device noises and process variations are considered \emph{in situ} to facilitate noise-resilient photonic AI engines.
    \item To our best knowledge, this is the first framework that supports on-chip training on million-parameter ONNs, over \textbf{1000}$\times$ more scalable and \textbf{30}$\times$ more efficient than prior art.
    Our PyTorch~\cite{NN_pytorch2019} library for ONN on-chip training is open-sourced at \href{https://github.com/JeremieMelo/L2ight}{link}.
\end{itemize}

\section{Related Work}
\label{sec:Background}

\textbf{Optical Neural Network and Training Methods.}
One of recent ONN architectures adopts singular value decomposition (SVD) to implement matrix multiplication~\cite{NP_NATURE2017_Shen}, i.e., $y=\bm{W}x=\bm{U\Sigma V^*}x$.
Cascaded 2-by-2 optical devices, i.e., Mach-Zehnder interferometers (MZIs), are used to construct unitary matrices as the product of a series of 2-dimensional unitary rotators, $U(n) = D \prod^{2}_{i=n}\prod^{i-1}_{j=1} R_{ij}(\phi_{ij})$.
A detailed introduction to ONNs can be found in Appendix~\ref{sec:AppendixONNPrinciple}.
Beyond offline training~\cite{NP_DATE2020_Gu}, ONN on-chip training methods are proposed to offload the process back onto photonics~\cite{NP_OPTICA2018_Hughes,NP_DAC2020_Gu, NP_AAAI2021_Gu}, shown in Table~\ref{tab:CompareOnchipTrain}.
Brute-force device tuning (\texttt{BFT})~\cite{NP_NATURE2017_Shen, NP_JSTQE2020_Zhou} and evolutionary algorithms~\cite{NP_OE2019_Zhang} are applied to search MZI settings.
An adjoint variable method (\texttt{AVM})~\cite{NP_OPTICA2018_Hughes} is proposed to directly evaluate gradients using \emph{in-situ} light field monitoring.
Stochastic zeroth-order optimization (ZOO)~\cite{NP_DAC2020_Gu, NP_AAAI2021_Gu} is later applied to improve the training efficiency.
However, prior methods are hard to scale to larger ONNs either due to algorithmic inefficiency or unrealistic hardware complexity.
\begin{table}[h]
\centering
\caption{Scalability comparison with prior ONN on-chip training protocols in terms of \#Params they can handle, used algorithm, resolution requirement (\emph{Req.}), and circuit observability requirement.
\emph{Coh. I/O} is short for coherent input/output~\cite{NP_Optica2020_Miller,NP_NatureComm2021_Zhang}.
ZO, FO mean zeroth- and first-order methods.}
\label{tab:CompareOnchipTrain}
\resizebox{\textwidth}{!}{%
\begin{tabular}{l|cccccc}
\toprule
                   & BFT~\cite{NP_NATURE2017_Shen}       & PSO~\cite{NP_OE2019_Zhang}       & AVM~\cite{NP_OPTICA2018_Hughes}       & FLOPS~\cite{NP_DAC2020_Gu}      & MixedTrn~\cite{NN_AAAI2020_Bibi}   & \name     \\\midrule
\#Params             & $\sim$100 & $\sim$100 & $\sim$100 & $\sim$1000 & $\sim$2500 & $\sim$\textbf{10 M} \\
Algorithm              & ZO           & ZO          & FO          & ZO           & ZO           &   ZO+FO        \\
Resolution Req.    &    Medium       & High          & Medium          & High           & Med           &  Medium          \\
Observability Req. & Coh. I/O           &  Coh. I/O         &  Coh. I/O + Per device monitor         & Coh. I/O           & Coh. I/O           & Coh. I/O  \\\bottomrule       
\end{tabular}%
}
\end{table}

\textbf{Efficient NN Training Methods.}
Extensive work has been devoted to accelerating DNN training, among which an important branch is sparse backpropagation.
Previous methods mainly focus on approximating matrix multiplication by sparsifying the pre-activation gradients~\cite{NN_ICML2017_Sun}, forward and feedback matrices~\cite{NN_2018Arxiv_Menachem, NN_NeurIPS2020_Aamir}, and input feature maps~\cite{NN_ICLR2021_Oktay}.
Quantization to the pre-activation gradients is adopted in \cite{NN_2020CVPRW_wiedemann} to induce sparsity by trading off quantization steps and performance.
Other methods also exist, e.g., distributed and low-precision training~\cite{NN_NeurIPS2018_Banner, NN_ICML2018_Bernstein, NN_Arxiv2018_Jia}. 
However, they are not readily applicable to analog photonic engines, thus not in the scope of our discussion.

\textbf{Subspace Neural Networks.}
Subspace neural networks are special DNN models with restricted parameter space but demonstrate comparable representability to classical NNs.
Sparse NNs~\cite{NN_ICLR2016_Han,NN_NIPS2016_Wen}, low-rank NNs~\cite{NN_NIPS2014_Denton,NN_ICLR2015_Lebedev,NN_ICLR2016_Tai}, structured NNs~\cite{NN_MICRO2017_Ding,NN_ICLR2018_Li,NN_DATE2020_Wang}, Fourier-domain NNs~\cite{NP_ASPDAC2020_Gu,NP_CLEO2020_Miscuglio,NP_Optica2020_Miscuglio}, and general frequency-domain NNs~\cite{NP_TCAD2020_Gu} were introduced to trim down the redundancy in DNNs by restricting the NN structure, matrix rank, numerical resolution, etc.
In this work, we deeply explore the trade-off between ONN learnability, trainability, and efficiency in the restricted unitary subspace.

\textbf{Challenges of ONN On-Chip Training.}
As a unique hardware-restricted optimization problem, ONN \textit{in-situ} learning encounters fundamental challenges causing scalability issues in prior methods:
\begin{itemize}[leftmargin=*]
\setlength{\itemindent}{0.5em}
    \item \textbf{Lack of full-observability for \textit{in-situ} light field.}
    Tracking physical optical field on every waveguide in $\bm{U}$ and $\bm{V}^{\ast}$ is not scalable or practical when ONNs scale up.
    Per device light field monitoring and calibration~\cite{NP_Optica2014_Grillanda,NP_OPTICA2018_Hughes} involves intractable hardware complexity.
    In practice, only $\bm{\Sigma}$ can be precisely monitored and efficiently tuned.
    \item \textbf{Limited input/output observability}.
    In photonic tensor cores, for efficiency consideration, only the final output signals after $\bm{U\Sigma V}^{\ast}$ can be coherently detected. 
    Intermediate signals of a single unitary projection can not be easily read out without extra hardware support.
    \item \textbf{Inaccessible gradients for most control variables.}
    Due to the above two limitations, it is challenging to obtain true derivatives w.r.t. the MZI rotation phases in $\bm{U}$ and $\bm{V}^{\ast}$~\cite{NP_NATURE2017_Shen,NP_DAC2020_Gu,NP_AAAI2021_Gu}, casting fundamental \textit{in-situ} optimization difficulty as ONN scales up.
\end{itemize}

To enable \emph{in-situ} self-learning for ONNs, the proposed synergistic framework \name provides a scalable, efficient, and on-chip-implementable solution that overcomes those hardware restrictions.

\section{Synergistic ONN On-Chip Learning Framework \name}
\label{sec:Method}

In this section, we give a formal description of the ONN on-chip training problem and detailed demonstration of our proposed three-stage learning flow \name, shown in Figure.~\ref{fig:ONNTrainFlow}.
\begin{figure}[H]
    \centering
    \includegraphics[width=0.97\textwidth]{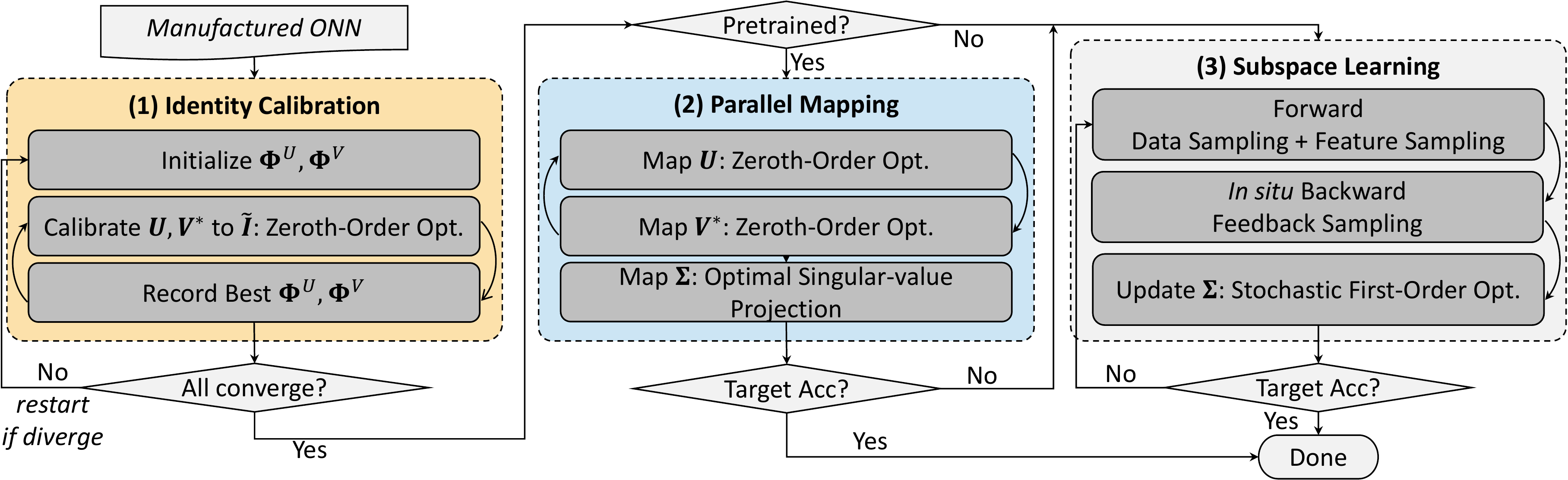}
    \caption{~Proposed three-stage ONN on-chip learning flow \name.}
    \label{fig:ONNTrainFlow}
    \vspace{-10pt}
\end{figure}

\subsection{Understanding the ONN On-Chip Learning Problem}
\label{sec:ProblemFormulation}
The ONN that supports on-chip learning is shown in Figure~\ref{fig:ONNTrainArch}, constructed by local storage, control units, interconnects, and photonic tensor cores with coherent I/O~\cite{NP_Optica2020_Miller,NP_NatureComm2021_Zhang} and wavelength-division multiplexing (WDM)~\cite{NP_CommuMag2010_Yu, NP_OE2014_Tan} for parallel processing.
\begin{figure*}
    \centering
    \includegraphics[width=0.98\textwidth]{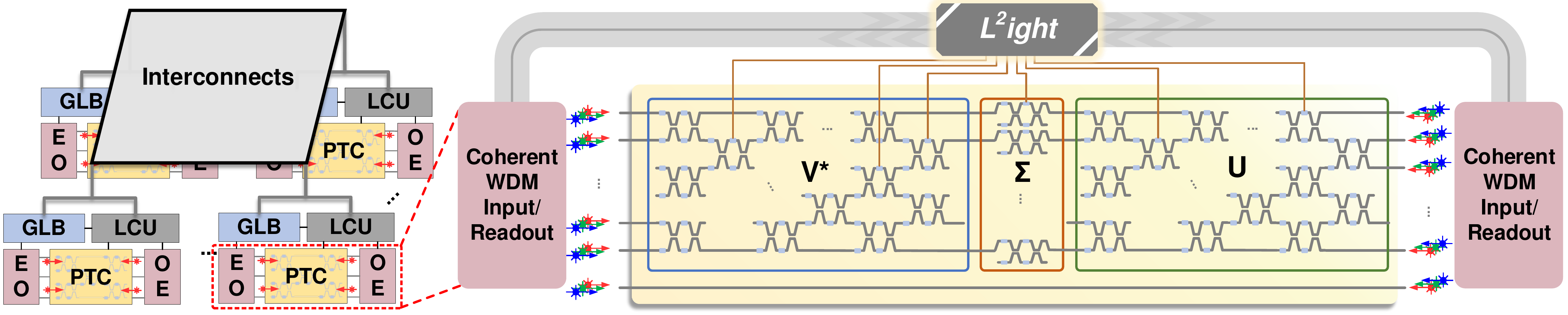}
    \caption{~\small ONN architecture. PTC: photonic tensor core, GLB: global buffer, LCU: local control unit, EO: electrical-to-optical conversion.}
    \label{fig:ONNTrainArch}
    \vspace{-10pt}
\end{figure*}
The target is to optimize MZI phases $\bm{\Phi}$ directly on chip under variations.
Formally the \emph{hardware-restricted} learning problem is,
\begin{equation}
    \small
    \label{eq:Objective}
    \begin{aligned}
    &\bm{\Phi}^*=\argmin_{\bm{\Phi}}~\mathcal{L}\big(\bm{W}(\bm{\Omega}\bm{\Gamma}\mathcal{Q}(\bm{\Phi})+\bm{\Phi}_b);\mathcal{D}_{trn}\big),\\
    \text{s.t.}~~&\bm{W}(\bm{\Phi})=\big\{\bm{W}_{pq}(\bm{\Phi}_{pq})\big\}_{p=0,q=0}^{p=P-1,q=Q-1},\quad \bm{W}_{pq}(\bm{\Phi}_{pq})=\bm{U}_{pq}(\bm{\Phi}_{pq}^U)\bm{\Sigma}_{pq}(\bm{\Phi}_{pq}^S)\bm{V}_{pq}^*(\bm{\Phi}_{pq}^V),\\
    &\bm{U}_{pq}(\bm{\Phi}_{pq}^U)=\bm{D}_{pq}^U\prod_{i=k}^2\prod_{j=1}^{i-1}\bm{R}_{pqij}(\phi_{pqij}^U),\quad \bm{V}_{pq}^*(\bm{\Phi}_{pq}^V)=\bm{D}_{pq}^V\prod_{i=k}^2\prod_{j=1}^{i-1}\bm{R}_{pqij}(\phi_{pqij}^V),\\
    &\bm{\Sigma}_{pq}(\bm{\Phi}_{pq}^S)=\max(|\bm{\Sigma}_{pq}|)\texttt{diag}(\cdots,\cos{\phi_{pq,i}^S},\cdots), \quad \bm{\Phi}_b\sim\mathcal{U}(0,2\pi), ~\bm{\Gamma}\sim\mathcal{N}(\gamma, \sigma_{\gamma}^2).
    \end{aligned}
\end{equation}
The linear projection in an ONN adopts blocking matrix multiplication, where the $M\times N$ weight matrix is partitioned into $P \times Q$ blocks of size $k \times k$.
During the optimization of $\bm{\Phi}$, we jointly consider control resolution limits $\mathcal{Q}(\cdot)$~\cite{NP_DATE2020_Gu, NP_NATURE2017_Shen}, device process variations $\bm{\Gamma}$~\cite{NP_DATE2020_Gu, NP_DAC2020_Gu, NP_AAAI2021_Gu}, thermal crosstalk among adjacent devices $\bm{\Omega}$~\cite{NP_DAC2020_Gu,NP_ICCAD2020_Zhu}, and unknown phase bias due to manufacturing error $\bm{\Phi}_b$ for \emph{in-situ} robustness-aware training.
A detailed non-ideality analysis is in Appendix~\ref{sec:AppendixNonideality}.
For practicality, robustness, and convergence consideration, we select $k$=9, which is explained in Appendix~\ref{sec:AppendixScaling}.
\subsection{Identity Calibration (IC): Variation-Agnostic Circuit State Preparation}
\label{sec:IdentityCalibration}
After manufacturing, unknown process variations in waveguides make the initial state of PTCs unpredictable~\cite{NP_OFC2018_Timurdogan, NP_ICCAD2020_Zhu}.
A primary task is to prepare $\bm{U}$ and $\bm{V}^{\ast}$ to be identity matrices.
However, the calibration problem, i.e., $\min_{\bm{\Phi^U},\bm{\Phi^V}}~\sum_{p,q}\big(\|\bm{U}_{pq}(\bm{\Phi}^U_{pq})-\bm{I}\|_2^2+\|\bm{V}^{\ast}_{pq}(\bm{\Phi}^V_{pq})-\bm{I}\|_2^2\big)$, is not solvable given the observability and controllability constraints on $\bm{U}$ and $\bm{V}^{\ast}$.
The closest auxiliary problem that we can solve is the one with absolute operations on unitaries, i.e., $\min_{\bm{\Phi^U},\bm{\Phi^V}}~\sum_{p,q}\big(\||\bm{U}_{pq}(\bm{\Phi}^U_{pq})|-\bm{I}\|_2^2+\||\bm{V}^{\ast}_{pq}(\bm{\Phi}^V_{pq})|-\bm{I}\|_2^2\big)$.
We denote those two mean square errors as $MSE^U$ and $MSE^V$.
We rewrite it as a surrogate minimization of $\calL_{IC}$ that can lead to the same solution,
\begin{wrapfigure}[12]{rH}{0.52\textwidth}
\begin{minipage}{0.52\textwidth}
\vspace{-20pt}
\centering
    \subfloat[]{
    \includegraphics[width=0.46\textwidth]{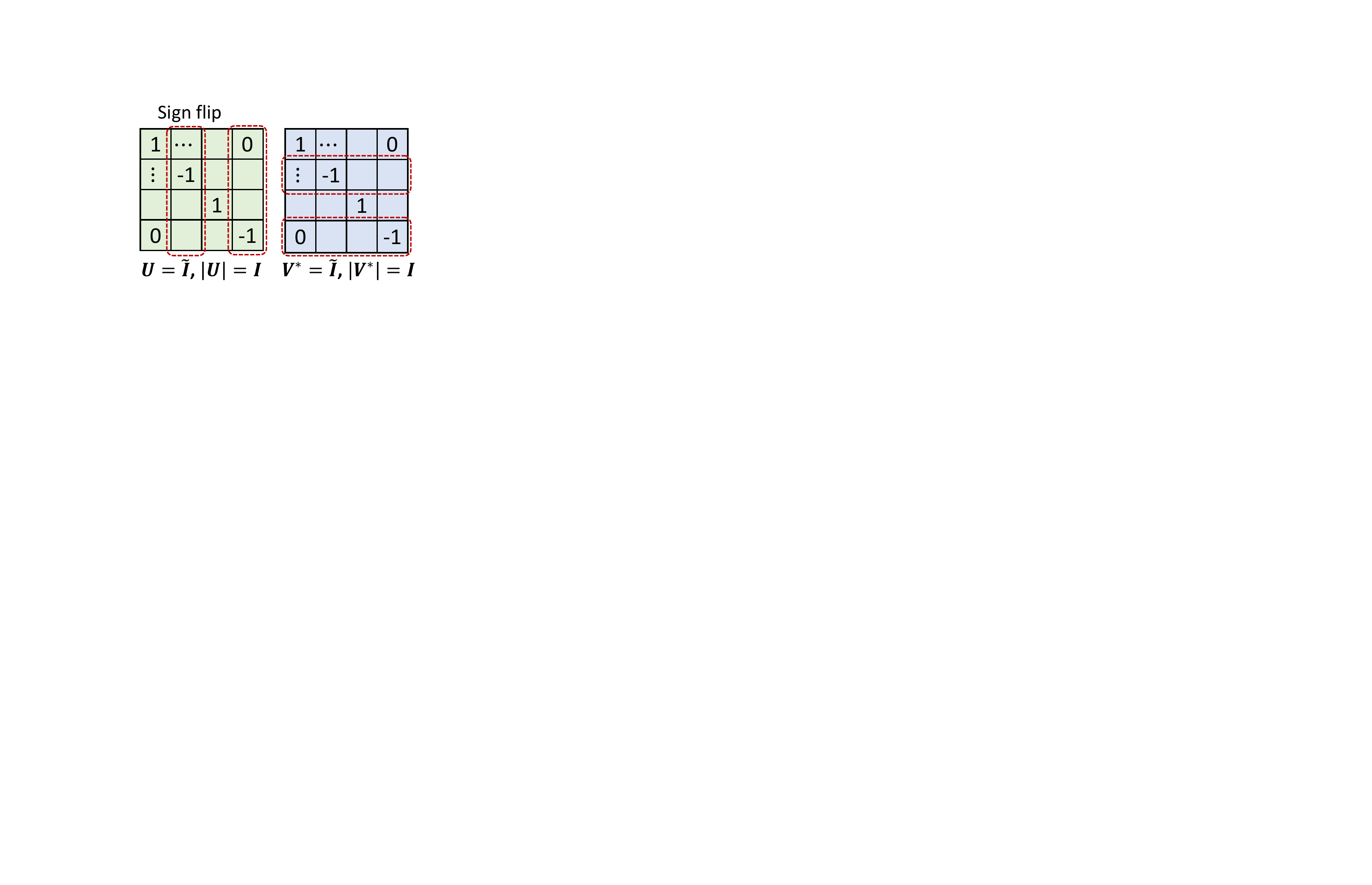}
    \label{fig:IdentityCalibrationSignFlip}}
    \subfloat[]{
    \includegraphics[width=0.48\textwidth]{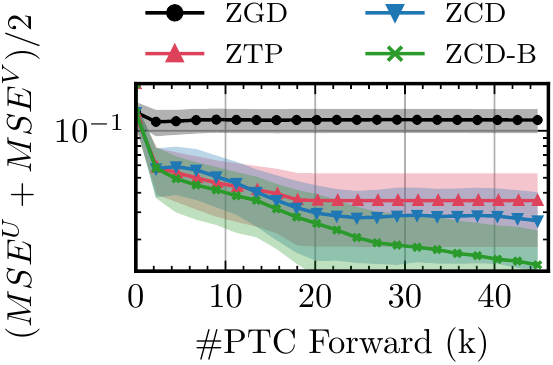}
    \label{fig:IdentityCalibrationCurve}}
    \label{fig:IdentityCalibration}
    \caption{\small 
    (a) Identity calibration with sign flip. 
    (b) Different ZO optimizers on identity calibration. (ZGD: ZO gradient descent with momentum, ZCD: ZO coordinate descent, ZTP: ZO three-point.
    \emph{B} is best solution recording.)
    }
\end{minipage}
\end{wrapfigure}
\begin{equation}
    \small
    \label{eq:IdentityCalibration}
    \min_{\bm{\Phi}}\!\sum_{p,q}\|\bm{U}_{pq}(\bm{\Phi}^U_{pq})\bm{\Sigma}_{pq}\bm{V}^{\ast}_{pq}(\bm{\Phi}^V_{pq})\bm{\Sigma}_{pq}^{-1}\!-\!I\|.
\end{equation}

The optimal solution for this auxiliary problem is $\bm{U}=\bm{V}^{\ast}=\tilde{\bm{I}}$, where $\tilde{\bm{I}}$ is not guaranteed to be an identity matrix but a more general \emph{sign-flipping matrix} with \emph{arbitrary and unobservable sign flips} on the same columns in $\bm{U}$ and rows in $\bm{V}^{\ast}$, shown in Figure~\ref{fig:IdentityCalibrationSignFlip}.
We adopt zeroth-order optimization (ZOO) on $\bm{\Phi}^{U}$ and $\bm{\Phi}^{V}$ to calibrate $\bm{U}$ and $\bm{V}^{\ast}$ to approach $\tilde{\bm{I}}$, shown in Figure~\ref{fig:IdentityCalibrationCurve}.
We show the converged solution of Eq.~\eqref{eq:IdentityCalibration} with unobservable sign flips and suboptimality only has marginal impacts on the following training procedure in later sections.

\subsection{Parallel Mapping (PM): Alternate Projection-based Model Deployment}
\label{sec:ParallelMapping}
The target is to map the pre-trained weights $\bm{W}$ onto photonic MZI meshes $\widetilde{\bm{W}}(\bm{\Phi})$ with high %
\begin{wrapfigure}[12]{r}{0.6\textwidth}
\begin{minipage}{0.6\textwidth}
\vspace{-20pt}
\centering
    \subfloat[]{\includegraphics[width=0.49\textwidth]{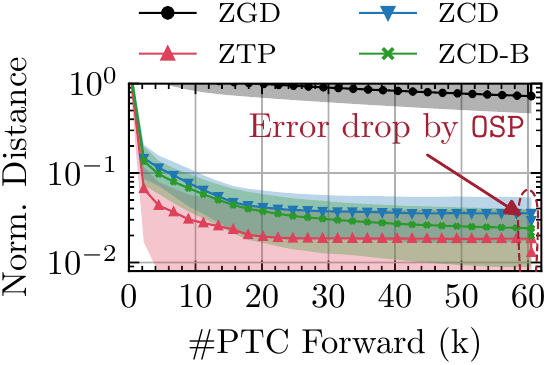}
    \label{fig:ParallelMappingLossCompare}}
    \subfloat[]{\includegraphics[width=0.49\textwidth]{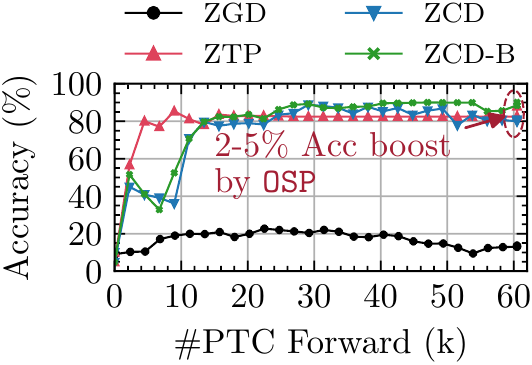}
    \label{fig:ParallelMappingAccCompare}}
    \caption{
    \small ZTP and ZCD-B perform the best in parallel mapping.
    The optimal singular-value projection leads to significant error drop and accuracy jump.)
    \label{fig:ParallelMappingCompare}
    }
\end{minipage}
\end{wrapfigure}
fidelity.
We formulate the parallel mapping as a batched $k\times k$-block-wise regression problem,
\begin{equation}
    \small
    \label{eq:Regression}
    \begin{aligned}
     \min_{\bm{\Phi}}~\sum_{p,q}\|\widetilde{\bm{W}}_{pq}(\bm{\Phi}_{pq})-\bm{W}_{pq}\|_2^2.
    \end{aligned}
\end{equation}

As analyzed before, $\frac{\partial\bm{W}}{\partial\bm{\Phi}^U}$ and $\frac{\partial\bm{W}}{\partial\bm{\Phi}^V}$ are too expensive to compute \textit{in situ}.
We propose a parallel mapping flow with alternate zeroth-order optimization on $\bm{\Phi}^U$ and $\bm{\Phi}^V$.
After convergence, we will perform analytical optimal singular-value projection (\texttt{OSP}) to minimize the regression error given fixed singular vectors.

We show why \texttt{OSP} gives the optimal solution under sign flips and how to perform it on the PTC.
\begin{myclaim}
Optimal singular-value projection (\texttt{OSP}): the optimal singular value problem, i.e., $\bm{\Sigma}_{opt}=\argmin_{\bm{\Sigma}}~\|\bm{U\Sigma V}^{\ast}-\bm{W}\|$, can be analytically solved on-chip with arbitrary and unknown sign flip.
\end{myclaim}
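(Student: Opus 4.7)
The plan is to exploit the structure that identity calibration (Section 3.2) imposes on $\bm{U}_{pq}$ and $\bm{V}^*_{pq}$: both are, up to a small residual, diagonal sign-flip matrices with $\pm 1$ entries at positions that are physically unknown but fixed. Composing two such factors with the diagonal $\bm{\Sigma}_{pq}=\max|\bm{\Sigma}_{pq}|\,\mathrm{diag}(\cos\phi^S_{pq,i})$ from Eq.~\eqref{eq:Objective} collapses the product to $\bm{U}_{pq}\bm{\Sigma}_{pq}\bm{V}^*_{pq}=\bm{D}_{pq}\bm{\Sigma}_{pq}$, where $\bm{D}_{pq}=\widetilde{\bm{I}}^U_{pq}\widetilde{\bm{I}}^V_{pq}$ is an unknown $\pm 1$ diagonal matrix. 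This diagonal collapse is precisely what converts the regression into a closed-form problem.

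Plugging the collapse into the block-wise objective of Eq.~\eqref{eq:Regression} and expanding gives
\[
\|\bm{U}_{pq}\bm{\Sigma}_{pq}\bm{V}^*_{pq}-\bm{W}_{pq}\|_F^2 = \sum_{i}(d_{pq,i}\sigma_{pq,i}-W_{pq,ii})^2 + \sum_{i\neq j}W_{pq,ij}^2.
\]
The off-diagonal tail is independent of $\bm{\Sigma}_{pq}$, so the problem decouples into $k$ scalar regressions whose minimizers are $\sigma^{opt}_{pq,i}=d_{pq,i}\,W_{pq,ii}$ (since $d_{pq,i}\in\{-1,+1\}$ implies $d_{pq,i}^{-1}=d_{pq,i}$), and the corresponding phase is $\phi^{S,opt}_{pq,i}=\arccos\bigl(\sigma^{opt}_{pq,i}/\max|\bm{\Sigma}_{pq}|\bigr)$, which is directly programmable. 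The remaining step is to argue that $\sigma^{opt}_{pq,i}$ is realizable even though $d_{pq,i}$ is never observed explicitly: a single coherent probe with $\bm{\Sigma}_{pq}=\bm{I}$ and a batched identity input returns $\bm{D}_{pq}$ on the detector, uniquely identifying every sign at once; equivalently, one can deploy $\sigma_{pq,i}=W_{pq,ii}$, read the sign of the $i$th coherent output, and flip $\phi^S_{pq,i}$ by $\pi$ whenever it disagrees. Either route recovers $\sigma^{opt}_{pq,i}$ in one pass per PTC, with no iterative search.

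The main obstacle is that identity calibration only solves its surrogate problem approximately, so $\bm{U}_{pq},\bm{V}^*_{pq}$ are merely near-diagonal sign-flip matrices, and the off-diagonal remnants introduce cross-terms that couple the otherwise decoupled scalar problems. I would bound these cross-contributions by the calibration residuals $\mathrm{MSE}^U_{pq}$ and $\mathrm{MSE}^V_{pq}$, showing that they perturb each $\sigma^{opt}_{pq,i}$ only to second order and hence preserve the closed-form assignment as the effective global optimum of Eq.~\eqref{eq:Regression} up to a residual that is empirically small (as reported in Section 3.2). This delivers the claim: given fixed, sign-flipping $\bm{U}_{pq}$ and $\bm{V}^*_{pq}$, the optimal $\bm{\Sigma}_{pq}$ reduces to an analytic, measurement-based entry-wise projection executable directly on chip.
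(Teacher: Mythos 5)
There is a genuine gap here: you have misidentified the setting in which \texttt{OSP} is invoked. The claim lives at the end of the \emph{parallel mapping} stage, where $\bm{U}_{pq}$ and $\bm{V}^{\ast}_{pq}$ have already been driven by zeroth-order optimization toward the actual singular vectors of $\bm{W}_{pq}$ — they are general (dense) unitaries, not near-diagonal sign-flip matrices. Your collapse $\bm{U}_{pq}\bm{\Sigma}_{pq}\bm{V}^{\ast}_{pq}=\bm{D}_{pq}\bm{\Sigma}_{pq}$ only holds in the post-calibration, pre-mapping state; if it held at \texttt{OSP} time the implemented weight would be diagonal, and the irreducible residual $\sum_{i\neq j}W_{pq,ij}^2$ that appears in your own expansion shows such a configuration cannot fit a general $\bm{W}_{pq}$, i.e., the mapping stage would be pointless. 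The paper's argument is instead: by unitary invariance of the Frobenius norm, $\|\bm{U\Sigma V}^{\ast}-\bm{W}\|_F=\|\bm{\Sigma}-\bm{U}^{\ast}\bm{W}\bm{V}\|_F$, so the best diagonal is $\bm{\Sigma}_{opt}=\texttt{diag}(\bm{U}^{\ast}\bm{W}\bm{V})$; the sign flip enters only because the on-chip \emph{measurement} of this quantity requires configuring one unitary to ``identity,'' which is achievable only up to the unknown $\tilde{\bm{I}}$. Since $\texttt{diag}(\tilde{\bm{I}}^{\ast}\bm{U}^{\ast}\bm{W}\bm{V}\tilde{\bm{I}})=\texttt{diag}(\bm{U}^{\ast}\bm{W}\bm{V})$ (conjugation by a diagonal $\pm1$ matrix squares each sign on the diagonal), two reciprocal passes through the PTC return the correct answer without ever knowing the signs.

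Your fallback — probing the chip to identify $\bm{D}_{pq}$ or reading the sign of each output — also fails on its own terms and contradicts the premise of the claim. Only the end-to-end product $\bm{U\Sigma V}^{\ast}$ is observable, and since the calibration forces the same sign pattern on the columns of $\bm{U}$ and rows of $\bm{V}^{\ast}$, any such probe returns $\tilde{\bm{I}}\bm{\Sigma}\tilde{\bm{I}}=\bm{\Sigma}$, which is independent of the signs; they are structurally unobservable, which is precisely why the paper designs the projection so that they cancel rather than attempting to measure them. The perturbation analysis you sketch for imperfect calibration is a reasonable add-on (the paper only addresses this empirically), but it cannot rescue the argument because the underlying decoupling it is meant to protect does not apply to the matrices the claim is actually about.
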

\begin{proof}
\vspace{-10pt}
\begin{equation}
    \small
    \label{eq:OptimalProjection}
    \begin{aligned}
      \bm{\Sigma}_{opt}=&\texttt{diag}\big(\bm{U}^{-1}\bm{W}(\bm{V}^{\ast})^{-1}\big)=\texttt{diag}\big(\bm{U}^{\ast}\bm{W}\bm{V}\big)
      =\texttt{diag}\big((\tilde{\bm{I}}^{\ast}\bm{V}^{\ast}\bm{W}^{\ast}\bm{U}\tilde{\bm{I}})^{\ast}\big).
    \end{aligned}
\end{equation}
\texttt{OSP} can be directly achieved using the limited operation set, i.e., $\{\bm{U},\bm{U}^{\ast},\bm{V},\bm{V}^{\ast}\}$, supported by the reciprocal PTC itself.
Specifically, we configure $\bm{V}^{\ast}=\tilde{\bm{I}}$ and $\bm{\Sigma}=\bm{I}$, and shine in a coherent WDM light beam that carries $\bm{W}$ from right ports.
Since the coherent photonic circuit is reciprocal~\cite{NP_Optica2020_Miller}, we can read $\tilde{\bm{I}}\bm{U}^{\ast}\bm{W}$ on the left ports.
Then we configure $\bm{U}=\tilde{\bm{I}}$ and $\bm{\Sigma}=\bm{I}$, and shine in its adjoint field from left, i.e., $\bm{W}^{\ast}\bm{U}\tilde{\bm{I}^{\ast}}$.
We can directly read out the projected optimal diagonal on the right because the sign flips in the unitary matrices naturally cancel out on the diagonal.
\end{proof}

Figure~\ref{fig:ParallelMappingCompare} compares different ZO optimizers on this task.
Coordinate-wise optimizers (\texttt{ZCD}~\cite{NN_NIPS2016_Lian} and \texttt{ZTP}~\cite{NN_SIAM2020_Bergou}) outperform the gradient-based \texttt{ZGD}~\cite{NN_SIAM2013_Ghadimi} with higher accuracy and convergence speed.
This procedure is highly parallel and efficient since the mapping involves \emph{no stochasticity} and only happens \emph{locally} within each PTC.
We can also observe that \texttt{OSP} effectively reduces the normalized matrix distance ($\|\bm{W}-\widetilde{\bm{W}}\|_2^2/\|\bm{W}\|_2^2$) and boosts the accuracy by 2-5\% almost for free.

\subsection{Subspace Learning: Hardware-Aware Multi-Level Sparse Training}
Besides mapping from an offline-trained model, \name also supports \emph{in-situ} self-learning fully on chip.
We name this feature as \emph{subspace learning}. 
To make \name hardware-aware, we trade expensive full-space trainability for efficient subspace gradient evaluation, i.e., $\pfrac{\calL}{\bm{\Sigma}}$
which coincides with the general frequency-domain ONNs~\cite{NP_ASPDAC2020_Gu,NP_TCAD2020_Gu} and subspace NN design concept~\cite{NN_KDD2018_Sun}.
Since this learning stage involves stochasticity, it turns out to be the efficiency bottleneck, especially the backward pass.
Hence, we explore multi-level sparsity for efficient \emph{in-situ} gradient approximation.

\subsubsection{\emph{In-situ} Subspace Gradient Acquisition via Reciprocity in Optics}
\begin{wrapfigure}[14]{r}{0.48\textwidth}
\vspace{-15pt}
    \centering
    \includegraphics[width=0.48\textwidth]{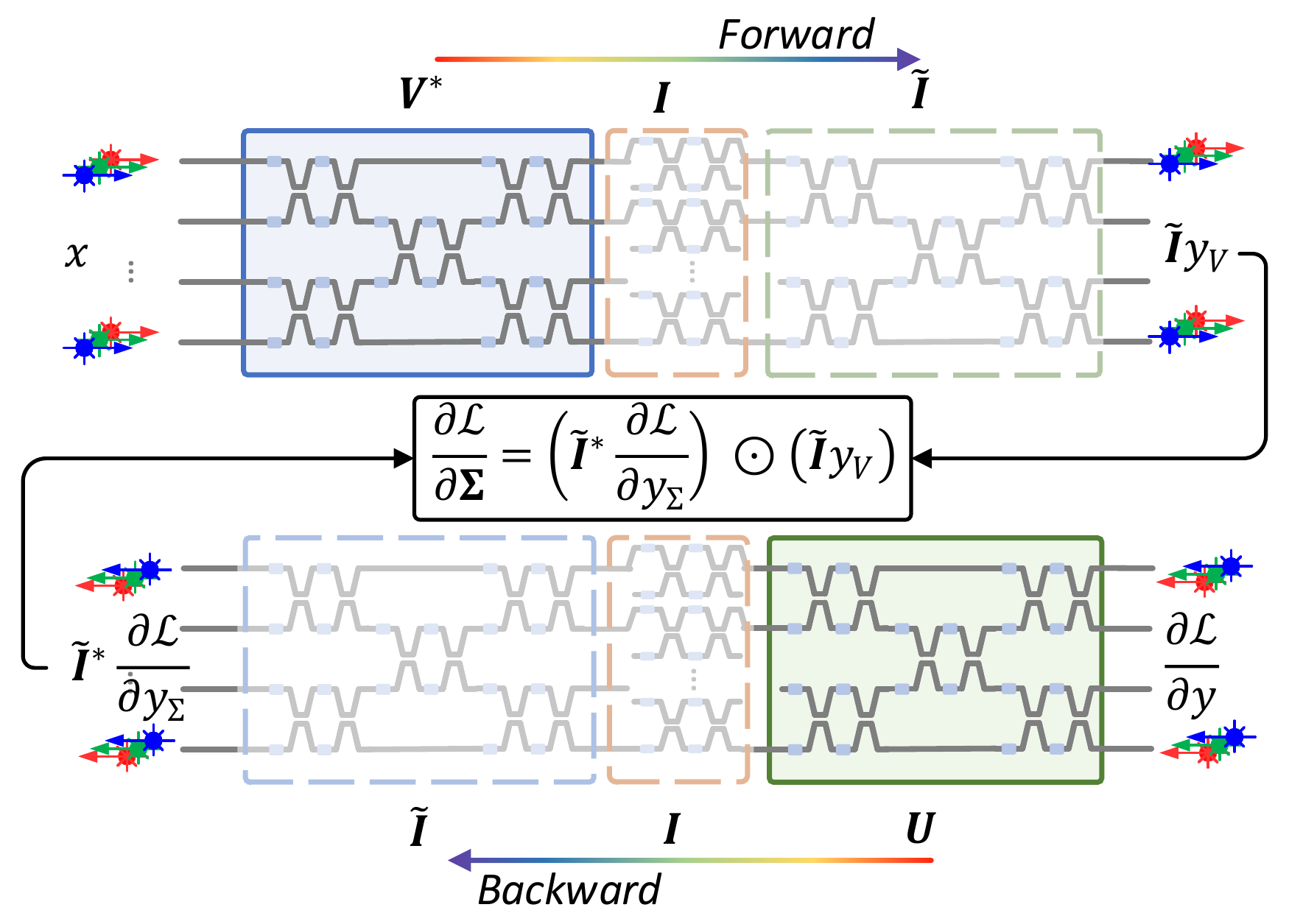}
    \caption{~\small \textit{In-situ} subspace gradient acquisition.}
    \label{fig:BackProp}
\end{wrapfigure}
The conventional way to compute first-order gradients w.r.t. $\bm{\Sigma}$ is $\frac{\partial\mathcal{L}}{\partial\bm{\Sigma}}=\texttt{diag}\big(\bm{U}^{\ast}\frac{\partial\mathcal{L}}{\partial\bm{W}}\bm{V}\big)$.
However, $\frac{\partial\mathcal{L}}{\partial\bm{W}}=\frac{\partial\mathcal{L}}{\partial\bm{y}}\bm{x}^T$ requires arbitrary matrix multiplication, which is not implementable by weight-stationary PTCs.
Hence, we remap it as,
\begin{equation}
    \small
    \label{eq:BackProp}
    \begin{aligned}
        \frac{\partial\mathcal{L}}{\partial\bm{\Sigma}}=\frac{\partial\mathcal{L}}{\partial\bm{y}_{\Sigma}}\odot\bm{y}_{V}=\big(\tilde{\bm{I}}\bm{U}^{\ast}\frac{\partial\mathcal{L}}{\partial\bm{y}}\big)\odot(\tilde{\bm{I}}\bm{V}^{\ast}\bm{x}).
    \end{aligned}
\end{equation}
By shining in coherent WDM beams carrying the inputs and upstream gradients forward and backward through the reciprocal PTCs, respectively, as shown in Figure~\ref{fig:BackProp}, the weight gradients can be efficiently obtained with lightweight element-wise multiplication $\odot$, which can be offloaded to electrical control units.
Note that $\tilde{\bm{I}}$ naturally cancels out by the Hadamard product with no impacts on gradient fidelity.

\subsubsection{Multi-Level Sparse Subspace Learning}
\label{sec:MultiLevelSparsity}
Inspired by sparse backpropagation methods~\cite{NN_ICML2017_Sun, NN_ICLR2021_Oktay, NN_NeurIPS2019_Wang, NN_NIPS2016_Arild, NN_NeurIPS2020_Aamir}, we propose multi-level sparse subspace learning to cut down both energy cost and total time steps in on-chip gradient evaluation.

\textbf{Balanced Feedback Sampling.}
To improve the efficiency of the error feedback process, i.e., $\bm{W}^T\pfrac{\calL}{y}$,
as shown in Figure~\ref{fig:BalancedFeedbackSampling}, we sample the feedback matrix $\bm{W}^T\in\mathbb{R}^{N\times M}$ with a structured sparse mask $\calP_W=c_W(\calS_W\otimes\bm{1})$ generated by the Kronecker product between a boolean mask $\calS_W\in\{0,1\}^{Q\times P}$ with sparsity $\alpha_W$ and an all-ones matrix $\bm{1}$, where the scaling factor $c_W$ is set to $\frac{1}{\alpha_W}=\frac{PQ}{\texttt{Tr}(\calS_W^T\calS_W)}$ for unbiased estimation, proven in Appendix~\ref{sec:AppendixUnbiasedGradient}.
The efficiency benefits come from two aspects:
(1) the structurally masked PTCs are entirely idle, directly saving energy,
and (2) the product accumulation depth/step is reduced by a factor of $\alpha_W$, effectively trimming time steps.

However, two major downsides exist on traditional \texttt{uniform} and layer-wise \texttt{topk} sampling~\cite{NN_NeurIPS2020_Aamir}.
\begin{wrapfigure}[16]{rH}{0.37\textwidth}
\begin{minipage}{0.365\textwidth}
\vspace{-6pt}
\centering
    \includegraphics[width=0.98\textwidth]{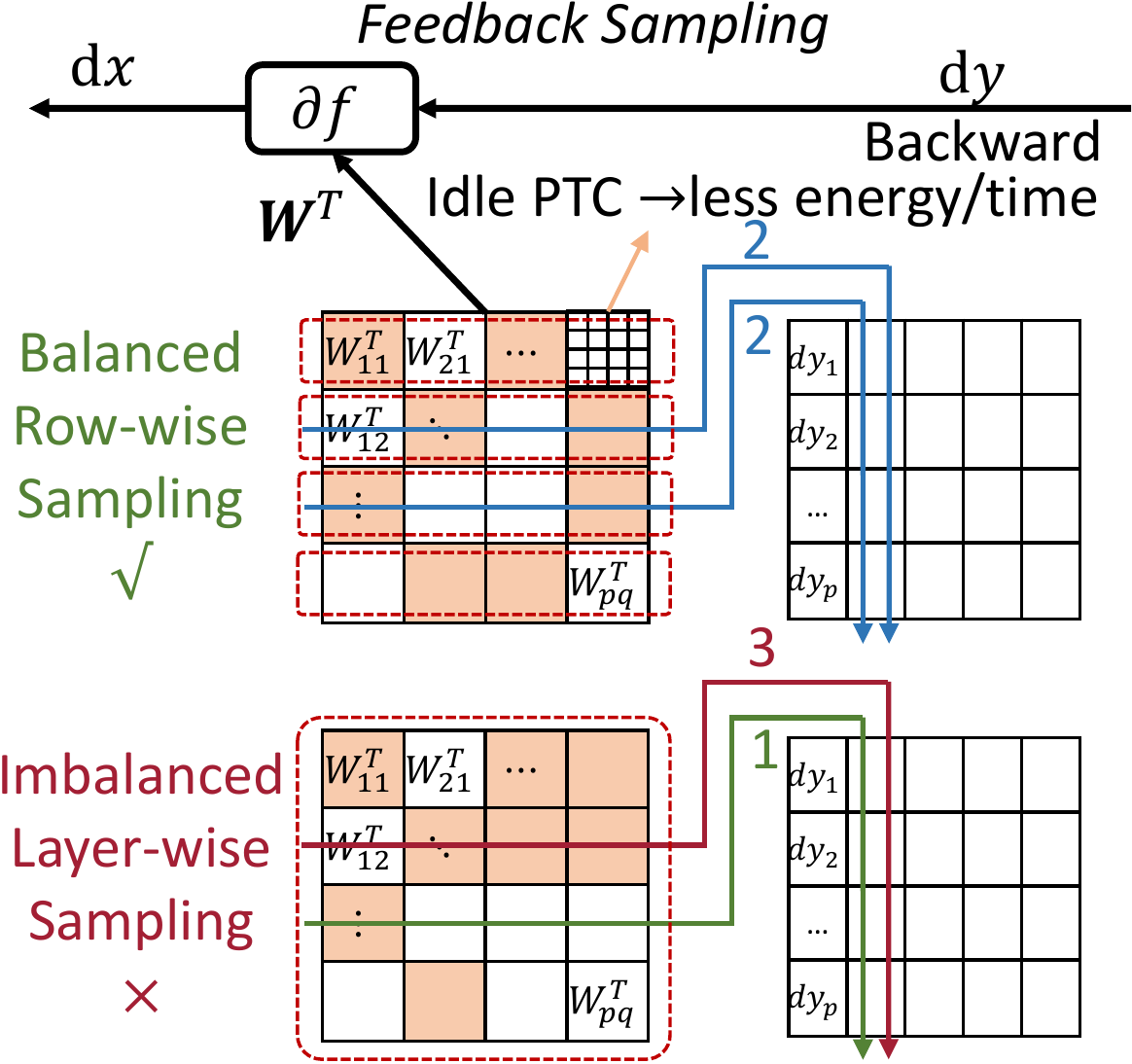}
    \caption{\small Balanced v.s. imbalanced feedback matrix sampling.
    }
    \label{fig:BalancedFeedbackSampling}
\end{minipage}
\end{wrapfigure}
First, on a backward path, multiple feedback sampling operators will be cascaded,
such that importance-unaware \texttt{uniform} sampling can lead to an exponentially large variance~\cite{NN_ICLR2021_Oktay}.
Second, \texttt{topk} sampling is overly greedy and tends to break the load balance as the feedback latency can be bottlenecked by the longest partial product accumulation path, shown in Figure~\ref{fig:BalancedFeedbackSampling}.
To tackle this, we propose a balanced top-K sampling (\texttt{btopk}) to draw $\mathcal{S}_W$ from a guided distribution that locally prefers blocks with large Frobenius norm, which can be efficiently evaluated by $\|\bm{W}_{pq}\|_{\mathcal{F}}^2=\texttt{Tr}(|\bm{\Sigma}_{pq}|^2)$.
It strikes a \emph{balance between gradient variance and bias} by fine-grained row-wise top-K sampling and \emph{eliminates load-imbalance} by guaranteeing the same sparsity for different rows of $\bm{W}^T$, i.e., $\sum_{p}\calS_W(1,:)=\sum_{p}\calS_W(2,:)=\cdots=\sum_{p}\calS_W(Q,:)$.
Figure~\ref{fig:FeedbackGradientAngular},~\ref{fig:FeedbackGradientAngularLayer} shows the gradient approximation fidelity in terms of average angular similarity~\cite{NN_Arxiv2018_Cer} and normalized matrix distance.
Our \texttt{btopk}-sampled weight gradients align well with the true gradients.
With the unbiased (\texttt{exp}) normalization factor $\alpha_W$, \texttt{btopk} shows the best gradient angular similarity and inference accuracy compared with others.

\textbf{Information-Preserving Column Sampling.}
\begin{figure}
    \centering
    \subfloat[]{
    \includegraphics[width=0.215\textwidth]{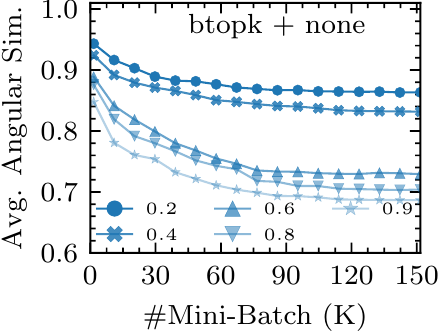}
    \label{fig:FeedbackGradientAngular}}
    \subfloat[]{
    \includegraphics[width=0.265\textwidth]{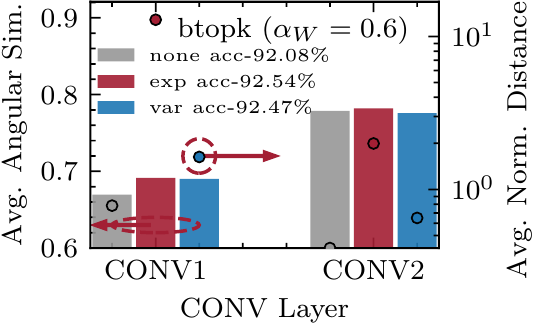}
    \label{fig:FeedbackGradientAngularLayer}}
    \subfloat[]{
    \includegraphics[width=0.215\textwidth]{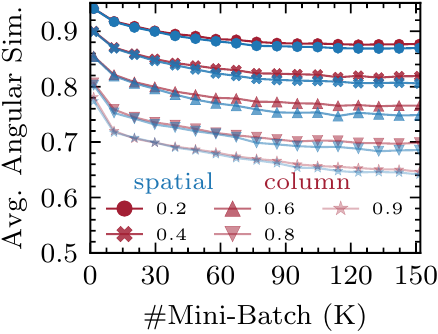}
    \label{fig:ConvFeatureGradientAngular}}
    \subfloat[]{
    \includegraphics[width=0.265\textwidth]{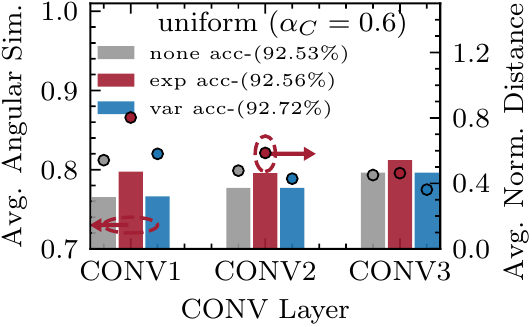}
    \label{fig:ConvFeatureGradientAngularLayer}}
    \caption{~\small
    Average gradient angular similarity with different feedback sparsity (a) and three normalization methods (b).
    \emph{none, exp,} and \emph{var} represents no, expectation-maintained, and variance-maintained normalization.
    Average gradient angular similarity with spatial and column sampling (c) and three normalization methods (d).}
    \vspace{-10pt}
\end{figure}
Input feature sparsification can also effectively cut down the gradient evaluation cost~\cite{NN_NeurIPS2020_Aamir,NN_ICLR2021_Oktay}, especially for costly CONV layers.
However, with traditional \emph{spatial sampling} (SS)~\cite{NN_NeurIPS2020_Aamir,NN_ICLR2021_Oktay}, the input feature map $x$ barely maintains its sparsity regularity after being transformed to flattened patches $X$ via \emph{im2col} if the kernel size is larger than 1, shown in Figure~\ref{fig:ConvFeatureSample}.
Hence, we propose a novel \emph{column sampling} (CS) as a better solution.
We sample $X$ using  %
\begin{wrapfigure}[18]{rH}{0.48\textwidth}
\begin{minipage}{0.48\textwidth}
\vspace{-10pt}
\centering
    \includegraphics[width=0.99\textwidth]{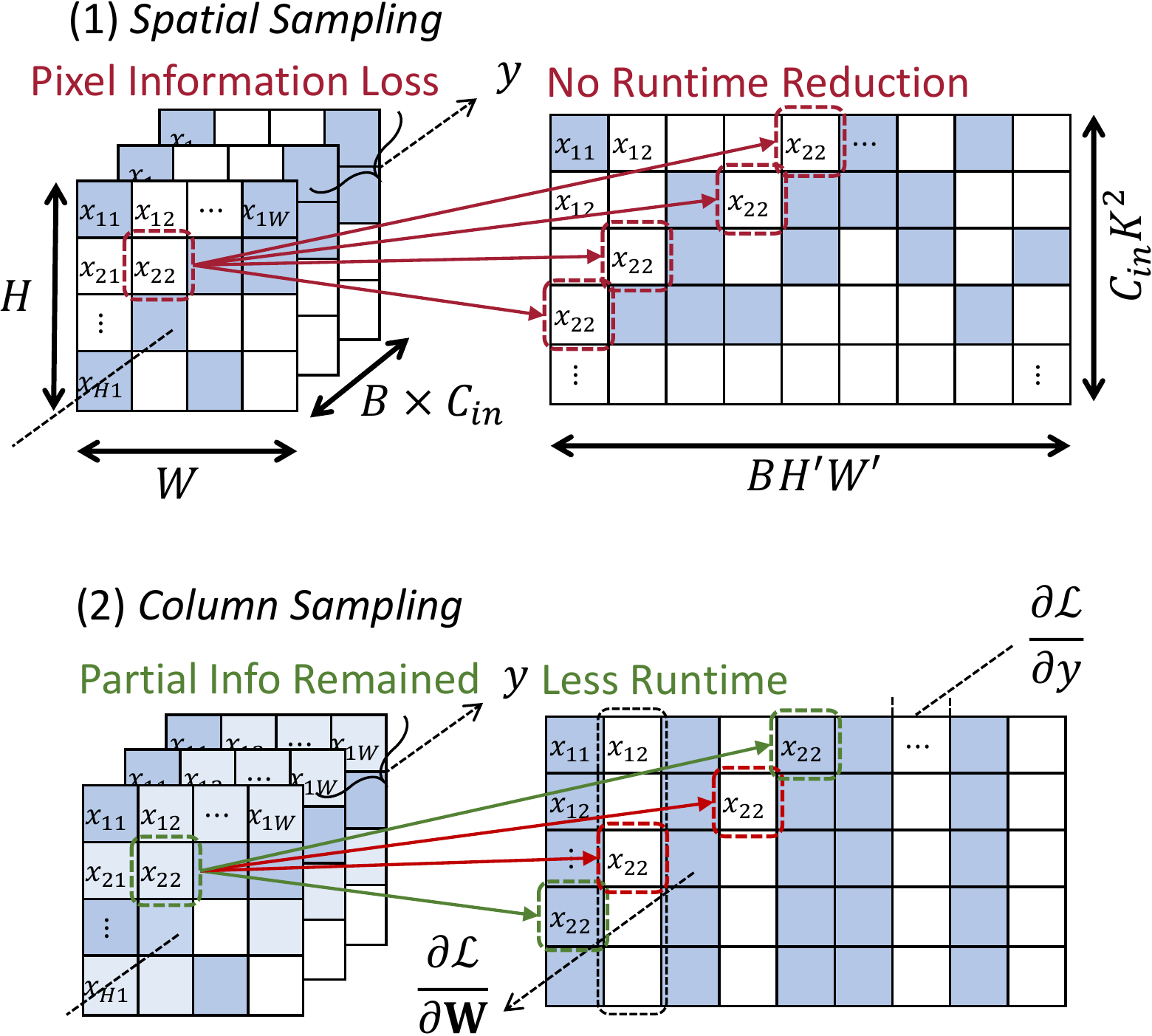}
    \caption{~\small Spatial and column sampling for CONV.}
    \label{fig:ConvFeatureSample}
\end{minipage}
\end{wrapfigure}
a mask $\calS_C\{0,1\}^{H'W'}$ with a uniform sparsity $\alpha_C$, which is shared across batches with negligible overhead.
This leads to both information preservation and efficiency improvement.
First, in Figure~\ref{fig:ConvFeatureSample}, a pixel appears in multiple columns, such that partial information can be maintained after column sampling.
Second, this highly-structured column dropping directly translates to less PTC forward energy and fewer partial gradient accumulation steps.
In contrast, with a spatial mask $\calS_S$ and spatial sparsity $\alpha_S$, the masked pixel will be completely dropped with poor regularity after \emph{im2col}, at the cost of large variance due to information loss and almost no runtime improvement on this dense linear projection engines.
Note that for CONV1$\times$1, CS turns out to be equivalent to SS, which can simultaneously save memory and runtime.
Figures~\ref{fig:ConvFeatureGradientAngular},~\ref{fig:ConvFeatureGradientAngularLayer} show that our proposed CS can obtain better gradient approximation fidelity than prior SS.
Different normalization has small effects on model accuracy since feature sampling only happens locally within each layer, without any variance cascade effect.
Note that simultaneous scaling by $\alpha_W$ and $\alpha_C$ tends to generate overly-confident gradient approximation, which empirically leads to harmful gradient variance.
Hence, we will adopt $\alpha_C$=1 in all experiments.

\textbf{Data Sampling.}
After parallel mapping, the ONN is initialized fairly close to the target pre-trained model. 
It is reasonable and intuitive to calibrate it with a representative calibration set instead of the entire training set.
Inspired by the mini-batch dropping (\texttt{SMD}) technique~\cite{NN_NeurIPS2019_Wang}, we integrate this \texttt{SMD} technique into our framework to further explore data-level sparsity.
Within one training epoch, we randomly skip each iteration with probability $\alpha_D$, directly translating to training time reduction.

\subsection{Complexity Analysis of Three Stages in \name}
We assume the total step in IC, PM, and SL is $T_1$, $T_2$, and $T_3$, respectively. 
The ONN has $L$ layers, each including an $N\times N$ weight matrix partitioned into multiple $k \times k$ blocks.

\noindent\textbf{Identity Calibration and Parallel Mapping.}~ 
Each block optimizes $k(k-1)$ phases using ZOO. 
All $LN^2/k^2$ blocks are optimized in parallel. 
The total step is $2k(k-1)T_1$ for IC and $2LN^2(k-1)T_2/k+3$ for PM. 
The total PTC call is around $2LN^2T_1$ or $2LN^2T_2$ for IC and PM, respectively.

\noindent\textbf{Subspace Learning.}~ We assume the feature map size is $H\times W$ with a batch size of $B$. 
The detailed complexity analysis is given in Appendix~\ref{sec:AppendixCost}. The total step is approximately $T_3LNBHW/k$.

According to our training cost profiler, IC and PM in total is 3-order-of-magnitude cheaper than the SL stage, since the batched parallel regression is deterministic and data-independent.

\vspace{-.05in}
\section{Results}
\label{sec:ExperimentalResults}
\subsection{Experiment Setup}
\label{sec:ExpSetup}
\textbf{Datasets.} We evaluate \name on Vowel~\cite{NN_Vowel1989}, MNIST~\cite{NN_MNIST1998}, FashionMNIST~\cite{NN_FashionMNIST2017}, CIFAR-10, CIFAR-100~\cite{NN_cifar2009}, and TinyImagenet~\cite{NN_imagenet2009}.
On CIFAR-10/100 and TinyImagenet, we adopt random crop, flip, color jittering for augmentation.

\textbf{Models.} We evaluate on a customized MLP (8-16-16-4)~\cite{NP_AAAI2021_Gu} on Vowel, CNN-S (CONV8K3S2-CONV6K3S2-FC10)~\cite{NP_AAAI2021_Gu} on MNIST, a CNN-L (\{CONV64K3\}$\times$3-Pool5-FC10) on FashionMNIST, and VGG-8~\cite{NN_NN2018_Dong}/ResNet-18~\footnote{\url{https://github.com/kuangliu/pytorch-cifar}}~\cite{NN_CVPR2016_He} on CIFAR-10/100.
CNN-L/FashionMNIST is used for ablation studies.
VGG-8/ResNet-18 on CIFAR-10/100 are used for accuracy and efficiency comparison.
Training details can be found in Appendix~\ref{sec:AppendixTrainDetails}.

\textbf{Efficiency Evaluation.}
We assume fully parallel $9\times9$-blocking matrix multiplication in photonic tensor cores and sequential partial product accumulation in electronics.
All experiments and performance measurements are based on software simulation with various noise modeling.
Our simulator counts the total number of PTC calls as the normalized energy indicator and the longest accumulation path as the normalized latency/runtime indicator.
Details of profiling can be found in Appendix~\ref{sec:AppendixCost}.

\subsection{Main Results}
\label{sec:ComparePriorWork}
\textbf{Scalability Comparison with Prior ONN Learning Protocols.}
Figure~\ref{fig:CompareScalability} compares \name with  %
\begin{wrapfigure}[10]{r}{0.55\textwidth}
\begin{minipage}{0.55\textwidth}
\vspace{-10pt}
    \centering
    \includegraphics[width=0.99\textwidth]{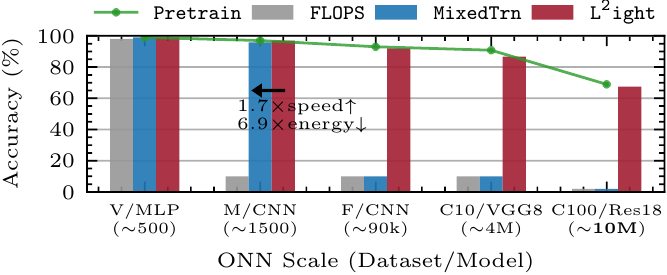}
    \vspace{-15pt}
    \caption{\small Compare scalability with prior protocols~\cite{NP_DAC2020_Gu, NP_AAAI2021_Gu}.}
    \label{fig:CompareScalability}
\end{minipage}
\end{wrapfigure}
two SOTA ONN on-chip training protocols, \texttt{FLOPS}~\cite{NP_DAC2020_Gu} and \texttt{MixedTrn}~\cite{NP_AAAI2021_Gu}.
For ZO methods, i.e., FLOPS and MixedTrn, we count the energy and latency of forward PTC query in Appendix~\ref{sec:AppendixCost}.
Prior protocols can only handle toy models ($\sim$2,000 params) given their algorithmic inefficiency and instability, while our \name shows $>$1,000$\times$ higher scalability to handle large ONNs ($\sim$10 M) on challenging tasks with comparable accuracy to full-space pre-trained models.
Though \texttt{MixedTrn} achieves comparable accuracy to \name on small benchmarks, we are still 1.7$\times$ faster and 6.9$\times$ more energy efficient.

The superiority of \name provides three important insights:
(1) \emph{decoupling ZOO from stochasticity} and \emph{partitioning a large-scale regression problem into a batch of sub-tasks} can greatly mitigate the curse of dimensionality both in convergence and efficiency.
(2) \emph{mapping before learning} can fully leverage the pre-trained model to reduce the learning cost.
Prior methods have to learn from a severely corrupted solution under variations, while \name recovers most accuracy via mapping, leaving a very light workload for subspace learning.
(3) Restricted subspace learning provides \emph{adequate degree of freedom} for training from scratch and task transfer.
Also, its \emph{compatibility with first-order} methods significantly boosts the trainability and breaks the scalability barrier for ONN training.
We now validate the above insights by extensive experiments.

\textbf{Training Efficiency Comparison with Prior Sparse Training Methods.}
In Figure~\ref{fig:MainResults}, we show accuracy and efficiency comparison of 1) baseline \name-SL (\texttt{BS}), 2) \name-SL with spatial sampling (\texttt{RAD}), 3) \name-SL with weight and spatial sampling (\texttt{SWAT-U}), and 4) \name-SL with all three introduced sampling methods (feedback, column, and data sampling), and 5) our proposed full flow with IC, PM, and sparse SL (\name).
To clarify, \name-SL performs subspace learning on-chip from scratch without using pre-trained weights, while \name includes the full flow, i.e., pre-training, mapping, and on-chip training.
When we perform subspace learning from scratch, our proposed \emph{multi-level sampling} strategies outperform previous \texttt{RAD} and \texttt{SWAT-U} by \textbf{$\sim$3$\times$} in hardware cost with comparable accuracy.
Though \texttt{RAD} can save the forward peak memory, it leaves the most expensive backward pass unoptimized, which does not fully exploit the sparsity in ONN training.
\texttt{SWAT-U} tries to save forward cost by simultaneously sparsifying the forward and feedback weights with shared masks/patterns.
However, in our experiment, the forward sparsification considerably degrades the model performance, which dilates the efficiency benefits from it.
Parallel mapping can fully leverage the pre-trained weights and help our full three-stage flow \name achieve the best accuracy with \emph{much faster convergence}, leading to \textbf{over 30$\times$} higher energy efficiency and fewer time steps.
\begin{figure}[h]
    \centering
    \includegraphics[width=\textwidth]{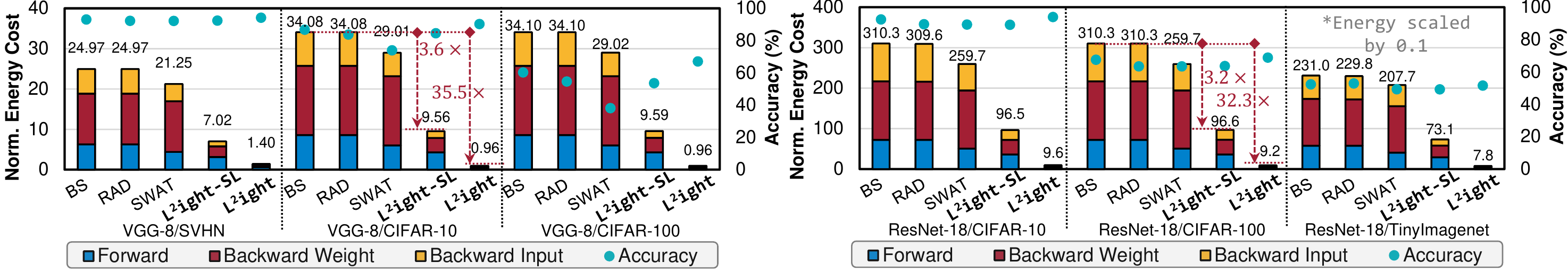}
    \caption{Accuracy and hardware efficiency comparison on VGG-8 (\emph{left}) and ResNet-18 (\emph{right}).}
    \label{fig:MainResults}
    \vspace{-5pt}
\end{figure}

Note that the energy efficiency and latency improvement is \emph{not just on the photonic part but a systematic performance boost}.
Our three-level sampling methods directly skip the pruned block, which means the corresponding cost of memory transaction, computation, control, and communication are removed together. 
Therefore, the sampling sparsity can be directly translated to the energy/latency improvement ratio regardless of whether the electrical part dominates the total cost.

\subsection{Ablation Studies and Discussion}
\label{sec:AblationStudy}
\subsubsection{Multi-Level Sparsity in Efficient Training}
\textbf{Feedback Sparsity.}
To investigate the impact of feedback sampling strategies, we visualize the gradient approximation fidelity and accuracy curves in Figure~\ref{fig:FeedbackAlgCompare}. \texttt{uniform} sampling shows varying performance under different sparsity values due to large gradient variances.
\texttt{topk} shows worse performance after sufficient steps due to its biased gradient estimation from overly greedy block selection.
In contrast, our proposed \textit{load-balancing} \texttt{btopk} strikes a balance between variance and bias via block-wise sampling and also leads to less runtime as it forces load balance among massively parallel PTCs.
In Table~\ref{tab:CompareBreakDown}, feedback sampling saves 50-60\% time steps on the most costly error feedback $\nabla_x\calL$, leading to 1.5-1.8$\times$ overall time step reduction with minimum accuracy drop.
\begin{figure*}[ht]
    \centering
    \vspace{-15pt}
    \subfloat[]{
    \includegraphics[width=0.3\textwidth]{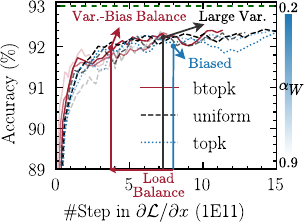}
    \label{fig:FeedbackAlgCompare}}
    \hspace{5pt}
    \subfloat[]{
    \includegraphics[width=0.3\textwidth]{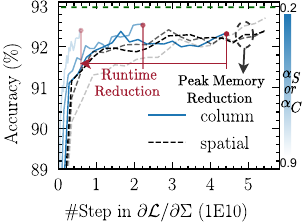}
    \label{fig:ConvFeatureAlgCompare}}
    \hspace{5pt}
    \subfloat[]{\includegraphics[width=0.275\textwidth]{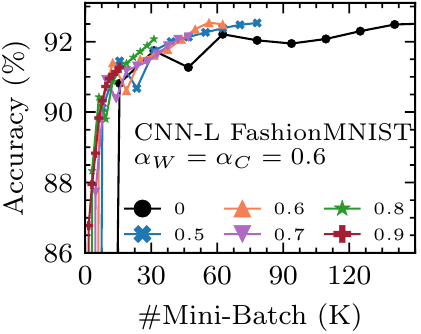}
    \label{fig:DataSamplingCompare}}
    \vspace{-5pt}
    \caption{~\small Accuracy v.s. weight gradient computation steps with three feedback sampling strategies (a) and different feature sampling techniques (b). 
    Accuracy (93.02\%) from a full-space trained model (green).
    CNN-L/FashionMNIST is used for (a) and (b).
    Compare different data sampling sparsity (c).}
\end{figure*}

\begin{table*}[b]
\centering
\caption{\small Compare sampling strategies on CIFAR-10 in terms of accuracy, activation size reduction, energy, and time step.
Forward, weight gradient, and error feedback are denoted as $\calL$, $\nabla_{\Sigma}\calL$, and $\nabla_{x}\calL$.
\name-SL is learning \emph{from scratch}, and \name (IC$\rightarrow$PM$\rightarrow$SL) is the full flow with pre-trained weights and non-ideal $\tilde{\bm{I}}$.}
\label{tab:VGG8SamplingCompare}
\resizebox{\textwidth}{!}{%
\begin{tabular}{|l|c|c|cccr|cccr|}
\hline
\multicolumn{1}{|c|}{\multirow{2}{*}{}} & \multirow{2}{*}{Acc$_{\pm\sigma}$ (\%)} & \multicolumn{1}{c|}{\multirow{2}{*}{Act$\downarrow$(\%)}} & \multicolumn{4}{c|}{Norm. PTC Energy}                    & \multicolumn{4}{c|}{Norm. \#Step}                       \\
\multicolumn{1}{|c|}{}                  &                          & \multicolumn{1}{c|}{}                         & $\calL$ & $\nabla_{\Sigma}\calL$ & $\nabla_{x}\calL$ & Total (Ratio) & $\calL$ & $\nabla_{\Sigma}\calL$ & $\nabla_{x}\calL$ & Total (Ratio)  \\ \hline\hline
\name-SL (Baseline) VGG-8                              & 86.66$_{\pm0.13}$                    & -                                            & 8.58    & 17.16           & 8.34           & 34.08 (1.00) & 32.64   & 5.49            & 92.02          & 130.14 (1.00) \\
+ Feedback Sampling ($\alpha_W$=0.6)                            & 86.41$_{\pm0.25}$                    & -                                            & 8.58    & 17.16           & 3.38           & 29.13 (1.17) & 32.64   & 5.49            & 35.76          & 73.89 (1.76)  \\
~~~~+ Column Sampling ($\alpha_C$=0.6)                         & 85.58$_{\pm0.01}$                    & -                                            & 8.58    & 7.16            & 3.38           & 19.12 (1.78) & 32.64   & 4.67            & 35.76          & 73.07 (1.78)  \\
~~~~~~~~+ Data Sampling ($\alpha_D$=0.5)                                & 84.45$_{\pm0.45}$                    & -                                            & 4.29    & 3.58            & 1.69           & 9.56 (3.56)  & 16.32   & 2.34            & 17.89          & 36.54 (3.56)  \\ \hline
+ RAD~\cite{NN_ICLR2021_Oktay} ($\alpha_S$=0.85)                                   & 83.68$_{\pm0.58}$                    &        11.78                  & 8.58    & 17.16           & 8.34           & 34.08 (1.00) & 32.64   & 5.49            & 92.02          & 130.14 (1.00) \\
+ SWAT-U~\cite{NN_NeurIPS2020_Aamir} ($\alpha_W$=0.3, $\alpha_S$=0.6)                                & 73.91$_{\pm0.27}$                    & 8.31                     & 6.01    & 17.16           & 5.84           & 29.01 (1.17) & 25.98   & 5.49            & 82.19          & 113.66 (1.15) \\ \hline
\textbf{\name (IC$\rightarrow$PM$\rightarrow$SL)}                                & \textbf{90.20$_{\pm0.05}$}                    & -                     & \textbf{0.43}    & \textbf{0.36}           & \textbf{0.17}           & \textbf{0.96 (35.64)} & \textbf{1.63}   & \textbf{0.23}            & \textbf{1.79}          & \textbf{3.65 (35.64)} \\\hline\hline
\name-SL (Baseline) ResNet-18      & 92.37$_{\pm0.08}$                    & -                                    & 72.24   & 144.49                 & 93.60             & 310.33 (1.00) & 463.40  & 27.23                  & 1,478.84          & 1,969.48 (1.00) \\
+ Feedback Sampling ($\alpha_W$=0.5)     & 91.35$_{\pm0.03}$                    & -                                    & 72.24   & 144.49                 & 48.13             & 264.86 (1.17) & 463.40  & 27.23                  & 747.22            & 1,237.85 (1.59) \\
~~~~+ Column Sampling ($\alpha_C$=0.5)       & 90.02$_{\pm0.16}$                     & 4.47                                 & 72.24   & 72.49                  & 48.13             & 192.86 (1.61) & 463.40  & 15.68                  & 747.21            & 1,226.30 (1.61) \\
~~~~~~~~+ Data Sampling ($\alpha_D$=0.5)         & 89.07$_{\pm0.04}$                    & 4.47                                 & 36.13   & 36.26                  & 24.07             & 96.46 (3.22)  & 231.76  & 7.84                   & 373.71            & 613.31 (3.21)   \\ \hline
+ RAD~\cite{NN_ICLR2021_Oktay} ($\alpha_S$=0.9)                   & 89.44$_{\pm0.17}$                    & 46.60                                 & 72.26   & 143.72                 & 93.60             & 309.58 (1.00) & 463.53  & 26.03                  & 1,478.84          & 1,969.00 (1.00) \\
+ SWAT-U~\cite{NN_NeurIPS2020_Aamir} ($\alpha_W$=0.3, $\alpha_S$=0.5) & 89.21$_{\pm0.16}$                     & 25.89                                & 50.57   & 143.64                 & 65.52             & 259.73 (1.19) & 358.40  & 26.56                  & 1,417.96          & 1,802.00 (1.09) \\ \hline
\textbf{\name (IC$\rightarrow$PM$\rightarrow$SL)}  & \textbf{93.91$_{\pm0.02}$}                    & 4.47                                 & \textbf{3.61}    & \textbf{3.62}                   & \textbf{2.41}              & \textbf{9.64 (32.20)}   & \textbf{23.16}   & \textbf{0.78}                   & \textbf{37.34}             & \textbf{61.29 (32.13)}    \\ \hline
\end{tabular}%
}
\label{tab:CompareBreakDown}
\vspace{-15pt}
\end{table*}

\textbf{Feature Sparsity.}~
Figure~\ref{fig:ConvFeatureAlgCompare} compares the accuracy and weight gradient computation time steps on two feature sampling techniques.
Though \textit{spatial sampling} (\texttt{ss}) can save peak storage by dropping a subset of activations during the forward pass, it shows no gradient computation step reduction. 
Our hardware-friendly \textit{column sampling} (\texttt{cs}) directly leads to energy and runtime reduction due to its structured sparsity.
In Table~\ref{tab:CompareBreakDown}, when column sampling is further added, we observe $\sim$50\% PTC energy saving on weight gradient computation $\nabla_{\Sigma}\calL$ at the cost of $\sim$1\% accuracy drop.

\textbf{Data Sparsity.}
In the data level, we also demonstrate how \texttt{SMD} with sparsity $\alpha_D$ impacts the training efficiency in Figure~\ref{fig:DataSamplingCompare}.
With the best selected $\alpha_W$ and $\alpha_C$,
data sparsity directly reduces training time by skipping iterations~\cite{NN_NeurIPS2019_Wang}.
The data sampling selects a uniform subset of the training set to represent the original data distribution, leading to less data processing with comparable generalization in the extracted features.
Another explanation is that the variance increased by partial replacement serves as a regularization mechanism to improve the model performance~\cite{NN_NeurIPS2019_Wang}.
For relatively easy tasks, aggressive sparsity ($\alpha_D$=0.8) is a sweet point, while for larger datasets shown in Table~\ref{tab:CompareBreakDown}, a medium sparsity (0.5) can be a good setting to balance both the training cost and accuracy.
With all three sampling methods integrated, our \name-SL shows competitive accuracy and $\sim$3$\times$ higher efficiency than \texttt{RAD} and \texttt{SWAT-U}.
More advanced dataset sampling methods are left for future exploration.

\subsubsection{Learnability of Restricted Subspace ONNs}
\textbf{Impacts of Calibration/Mapping Quality.}~
Table~\ref{tab:CompareBreakDown} shows that with IC and PM, the full \name flow %
\begin{wrapfigure}[12]{r}{0.35\textwidth}
\begin{minipage}[t]{0.35\textwidth}
\vspace{-10pt}
    \centering
    \includegraphics[width=0.97\textwidth]{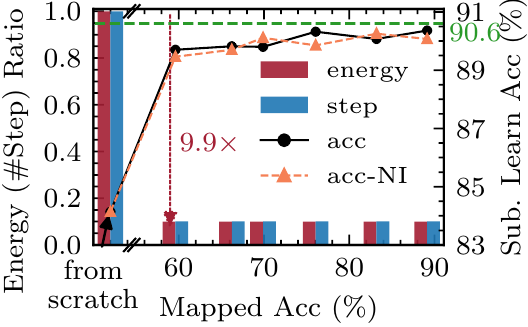}
    \vspace{-5pt}
    \caption{\small
    Impact of mapping accuracy
    (VGG-8 CIFAR-10 with $\alpha_W$=$\alpha_C$=0.6, $\alpha_D$=0.5).
    \textit{acc-NI} is the curve with non-ideal $\tilde{\bm{I}}$.}
    \label{fig:CompareDifferentMapAcc}
\end{minipage}
\end{wrapfigure}
achieves the highest accuracy with 32-35$\times$ efficiency boost over baselines.
We further evaluate the impact of different mapping accuracy and the calibration quality on subspace learning in Figure~\ref{fig:CompareDifferentMapAcc}.
First, \emph{parallel mapping or pre-training is not a must}.
Our subspace learning supports first-order optimization on-chip from random initialization.
Second, \emph{the optimality on subspace bases influences the final accuracy} as it determines the upper bound of accuracy that can be recovered by subspace learning.
With roughly optimized space bases, i.e., $\bm{U},\bm{V}^{\ast}$, subspace learning can efficiently train basis coefficients, i.e., $\bm{\Sigma}$, achieving 5-6\% higher accuracy and 9.9$\times$ less energy and steps compared with random unitaries (train from scratch).
Third, \emph{subspace optimization shows low sensitivity on mapping quality} and is able to compensate for the suboptimality in singular vectors within a reasonable range.
Even with 60\% mapped accuracy, singular value optimization has enough capability to recover the accuracy to $\sim$90\%.
Fourth, our subspace learning is \emph{robust to gradient noises} caused by non-ideal $\tilde{\bm{I}}$ ($MSE^U\!\!\approx$$MSE^V$$\approx$0.013), which shows that \name can tolerate reasonable suboptimality in the calibration and mapping stages.

\textbf{\emph{In-situ} Transferability in the Restricted Subspace.}
Another important question to answer is the %
\begin{wrapfigure}[14]{r}{0.61\textwidth}
\begin{minipage}[t]{0.61\textwidth}
\vspace{-15pt}
    \centering
    \subfloat[]{\includegraphics[width=0.49\textwidth]{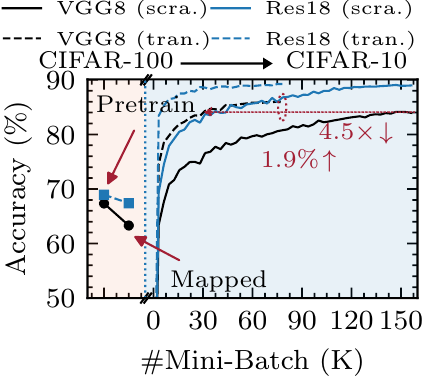}
    \label{fig:TransferVGG8}}
    \subfloat[]{\includegraphics[width=0.49\textwidth]{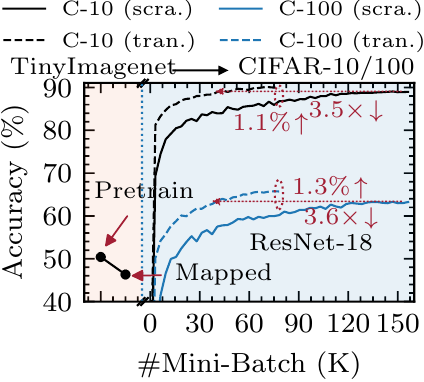}
    \label{fig:TransferResNet18}}
    \vspace{-8pt}
    \caption{~\small
    (a) Transfer VGG8/Res18 from CIFAR-100 to CIFAR-10.
    (b) Transfer Res18 from TinyImagenet to CIFAR-10 and 100.}
    \label{fig:TransferLearning}
\end{minipage}
\end{wrapfigure}
transferability of subspace learning.
After mapping, we fix the inherited unitaries and adapt to different tasks by only training the singular values.
Figure~\ref{fig:TransferLearning} shows that the inherited bases span a good design space with enough transferability.
The \textit{in-situ} subspace transfer learning shows 1-2\% higher final accuracy.
Also, it uses 3$\sim$5$\times$ fewer steps to obtain the same accuracy as training from scratch.
Hence, our proposed \name finds a highly trainable design point while the learnability is still mostly maintained.

\vspace{-.12in}
\section{Conclusion}
\label{sec:Conclusion}
In this work, we propose the \emph{first} scalable and efficient on-chip learning framework \name for emerging optical neural networks.
Our proposed three-stage flow synergistically enables on-chip self-learning via automatic circuit state calibration, parallel model mapping, and efficient subspace learning.
To further improve the learning efficiency, we explore multi-level sparsity, including balanced feedback sampling, information-preserving column feature sampling, and runtime-reduced data sampling.
Extensive ablation studies and comparison experiments show 3-order-of-magnitude scalability improvement over prior on-chip training protocols and 30$\times$ efficiency boost compared with previous sparse training methods.
In the future, we will go beyond current software simulation and experimentally validate the effectiveness of \name on real photonic neural chips.

\paragraph{Acknowledgments}

The authors acknowledge the Multidisciplinary University Research Initiative (MURI) program through the Air Force Office of Scientific Research (AFOSR), contract No. FA 9550-17-1-0071, monitored by Dr. Gernot S. Pomrenke.

\newpage

\newpage
\appendix
\section{ONN Principles}
\label{sec:AppendixONNPrinciple}
\subsection{Mach-Zehnder Interferometers (MZIs)}
\label{sec:AppendixMZI}

A basic coherent optical component used in this work is an MZI.
One of the most general MZI structures is shown in Figure~\ref{fig:MZI}, consisting of two 50-by-50 optical directional couplers and four phase shifters $\theta_T$, $\theta_L$, $\omega_P$, and $\omega_W$.
An MZI can achieve arbitrary 2$\times$2 unitary matrices $SU(2)$.
\begin{figure}[h]
    \centering
    \includegraphics[width=0.3\textwidth]{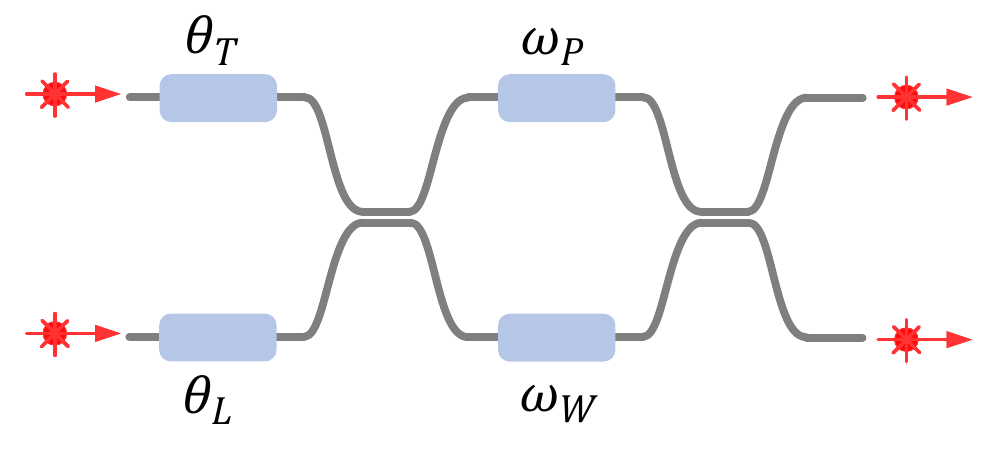}
    \caption{\small~2-by-2 MZI with top (T), left (L), upper (P), and lower (W) phase shifters.}
    \label{fig:MZI}
\end{figure}
The physical transfer matrix $R(\theta_g,\Delta\theta,\Delta\omega)$ of an MZI shown in Fig.~\ref{fig:MZI} is,
\begin{equation}
\small
\label{eq:MZITransferMatrix}
\begin{aligned}
    SU(2)=R(\theta_g,\Delta\theta,\Delta\omega)=&\begin{pmatrix}
       t & kj\\
       kj & t
    \end{pmatrix}
    \begin{pmatrix}
       e^{j\omega_P} & 0\\
       0 & e^{j\omega_W}
    \end{pmatrix}
    \begin{pmatrix}
       t & kj\\
       kj & t
    \end{pmatrix}
    \begin{pmatrix}
       e^{j\theta_T} & 0\\
       0 & e^{j\theta_L}
    \end{pmatrix}\\
    &=e^{j\theta_g}\begin{pmatrix}
    \sin{\frac{\Delta\omega}{2}} & \cos{\frac{\Delta\omega}{2}}\\
    \cos{\frac{\Delta\omega}{2}} & -\sin{\frac{\Delta\omega}{2}}
    \end{pmatrix}
    \begin{pmatrix}
    e^{j\frac{\Delta\theta}{2}} & 0\\
    0 & e^{-j\frac{\Delta\theta}{2}}
    \end{pmatrix},\\
    &\theta_g=\bar{\theta}+\bar{\omega}+\frac{\pi}{2},~\bar{\theta}=\frac{\theta_T+\theta_L}{2},~\bar{\omega}=\frac{\omega_P+\omega_W}{2},\\
    &\Delta\theta=\theta_T-\theta_L,~\Delta\omega=\omega_P-\omega_W,~ t=k=\frac{\sqrt{2}}{2}.
\end{aligned}
\end{equation}
where the global phase $\theta_g$ is determined by the common mode $\bar{\theta}$ and $\bar{\omega}$, and the light splitting is determined by the differential mode $\Delta\theta$ and $\Delta\omega$.
To achieve the 2-D planar rotator $R(2)$ in the real space parametrized by $\phi$, we let $\theta_T=\pi/2$, $\theta_L=3\pi/2$, $\bar{\omega}=\pi$.
To convert the simplified transfer matrix $M(\Delta\omega)$ to the planar rotator, we set $\Delta\omega=\pi-2\phi$ as follows,
\begin{equation}
\small
\label{eq:MZITransferMatrixSimplified}
\begin{aligned}
    R(2)=&e^{j\frac{3\pi}{2}}\begin{pmatrix}
    \sin{\frac{\Delta\omega}{2}} & \cos{\frac{\Delta\omega}{2}}\\
    \cos{\frac{\Delta\omega}{2}} & -\sin{\frac{\Delta\omega}{2}}
    \end{pmatrix}
    \begin{pmatrix}
    j & 0\\
    0 & -j
    \end{pmatrix}\\
    =&\begin{pmatrix}
    \sin{(\frac{\pi-2\phi}{2})} & \!\!-\cos{(\frac{\pi-2\phi}{2})}\\
    \cos{(\frac{\pi-2\phi}{2})} & \!\!\sin{(\frac{\pi-2\phi}{2})}
    \end{pmatrix}
    =\begin{pmatrix}
    \cos{\phi} & \!\!-\sin{\phi}\\
    \sin{\phi} & \!\!\cos{\phi}
    \end{pmatrix}.
\end{aligned}
\end{equation}

\subsection{MZI-based Photonic Tensor Core Architecture}
\label{sec:AppendixPTC}
By cascading $N(N-1)/2$ MZIs into a triangular mesh (Recks-style) or rectangular mesh (Clements-style), we can construct arbitrary $N\times N$ unitary $U(N)$.

As a simple example, we show the principle of Recks-style MZI array for a simple demonstration.
A similar decomposition can be derived for the Clements style.
It decomposes an $M \times N$ weight matrix using SVD, i.e., $\bm{W}=\bm{U\Sigma V^{\ast}}$.
The diagonal matrix $\bm{\Sigma}$ can be simply implemented by on-chip attenuators, e.g., single-port MZIs, to perform signal scaling.
The unitary matrices $\bm{U}$ and $\bm{V}^{\ast}$ can be realized by a cascaded MZI triangular array~\cite{NP_PHYSICAL1994_Reck}.
The unitary group parametrization is given by,
\begin{equation}
\small
\label{eq:UnitaryParametrization}
\bm{U}(N)=\bm{D}\;\prod_{i=N}^{2}\prod_{j=1}^{i-1}\bm{R}_{ij}(\phi_{ij}),
\end{equation}
where $\bm{D}$ is a diagonal matrix with $\pm{1}$ on its diagonal entries, and the 2-dimensional planar rotator $\bm{R}_{ij}(\phi_{ij})$ is an $n$-dimensional identity matrix where entries on ($i$,$i$), ($i$,$j$), ($j$,$i$), ($j$,$i$) are $\cos{\phi_{ij}}$, -$\sin{\phi_{ij}}$, $\sin{\phi_{ij}}$, $\cos{\phi_{ij}}$, respectively.
Each rotator $\bm{R}_{ij}$ can be implemented by a 2$\times$2 MZI that produces unitary interference of input light signals with a rotation angle $\phi$ as we show before.

\subsection{Optical Circuit Non-ideality}
\label{sec:AppendixNonideality}
\textbf{Rotation Quantization.}
Given the control resolution limits, we can only achieve discretized MZI rotation phase configurations.
We assume the phases $\phi$ is uniformly quantized into $b$-bit within [0,$2\pi$],
\begin{equation}
    \small
    \mathcal{Q}(\phi)=\texttt{Round}\Big(\frac{\phi~\text{mod}~2\pi}{2\pi/(2^{b}-1)}\Big)\frac{2\pi}{2^{b}-1}.
\end{equation}
We assume 8-bit quantization for phases of $\bm{U}$ and $\bm{V}^{\ast}$.
For $\bm{\Sigma}$ matrices, we assume larger bitwidths can be affordable and practical.

\textbf{Phase shifter Variation.}
Due to manufacturing error and thermal noises, the phase shift $\phi$ caused by a phase shifter is proportional to the device-related parameter, $\phi\propto\gamma$.
Assume the real coefficient drifts from the theoretical value $\gamma$ by $\Delta\gamma$, the real phase shift will become $\tilde{\phi}=\frac{\gamma+\Delta\gamma}{\gamma}\phi$.
We assume $\Delta\gamma\sim\mathcal{N}(0,0.002^2)$.
We denote this multiplicative error for all phase shifters as a diagonal $\bm{\Gamma}$ matrix, such that the non-ideal phase shifts become $\bm{\Phi}^v=\bm{\Gamma\Phi}$.

\textbf{MZI Crosstalk.}
Due to signal crosstalk, adjacent MZIs will have mutual coupling effects, such that the part of the phase shift $\phi$ for the $i$-th MZI will partially contribute to its neighboring MZI $\phi_j$ with a factor of $\omega_{i,j}$.
This crosstalk effect can be simply modeled as coupling matrix $\bm{\Omega}$,
{\small\begin{align}
    \small\centering
    \label{eq:Crosstalk}
    \begin{pmatrix}
    \phi^c_0\\
    \phi^c_1\\
    \vdots\\
    \phi^c_{N-1}
    \end{pmatrix}=&
    \begin{pmatrix}
    \omega_{0,0} & \omega_{0,1} & \cdots & \omega_{0,N-1} \\
    \omega_{1,0} & \omega_{1,1} & \cdots & \omega_{1,N-1} \\
    \vdots & \vdots & \ddots & \vdots \\
    \omega_{N-1,0} & \omega_{N-1,1} & \cdots & \omega_{N-1,N-1}
    \end{pmatrix}
    \begin{pmatrix}
    \phi^v_0\\
    \phi^v_1\\
    \vdots\\
    \phi^v_{N-1}
    \end{pmatrix}\notag\\
    \rm{s.t.}~~&\omega_{i,j}=1, \quad \forall\;i=j\notag\\
    &\omega_{i,j}=0, \quad \forall\;i\neq j \text{ and }\phi_j \in \mathcal{P}\\
    &0\leq\omega_{i,j}<1, \quad \forall\;i\neq j \text{ and }\phi_j \in \mathcal{A}\notag.
\end{align}}
The diagonal factor $\omega_{i,j}, i=j$ is the self-coupling coefficient.
$\omega_{i,j}, i\neq j$ is the mutual coupling coefficient~\cite{NP_JLT2019_milanizadeh, NP_DAC2020_Gu, NP_AAAI2021_Gu}.
We assume the self-coupling coefficient to be 1, and the mutual coupling coefficient is 0.005 for adjacent MZIs.

\section{Intractable Gradients for MZI Rotations}
\label{sec:AppendixPhaseGradient}
To optimize the MZI meshes, a straightforward idea is to use first-order methods to optimize all rotations phases $\bm{\Phi}^{U}$, $\bm{\Phi}^{V}$, and $\bm{\Phi}^{\Sigma}$.
The analytical gradients for phases in unitary matrices are shown as,
\begin{equation}
\small
\begin{aligned}
    \label{eq:GradientUnitaryParametrization}
    \frac{\partial\mathcal{L}}{\partial\bm{R}_{ij}}&=\big(\bm{D}\bm{R}_{n1}\bm{R}_{n2}\bm{R}_{n3}\big)^T\nabla_{y}\mathcal{L}~x^{T}\big(\cdots\bm{R}_{32}\bm{R}_{21}\bm{\Sigma}\bm{V}^{\ast}\big)^T\\
    \frac{\partial\mathcal{L}}{\partial\phi_{ij}}&=\mathrm{Tr}\bigg(\Big(\frac{\partial\mathcal{L}}{\partial\bm{R}_{ij}}\odot\frac{\partial{\bm{R}_{ij}}}{\partial{\phi_{ij}}}\Big)(e_i+e_j)(e_i+e_j)^T\bigg).
\end{aligned}
\end{equation}
Therefore, it is prohibitively expensive to derive the analytical phase gradients, which is one of the key motivations for our subspace optimization method.

\section{Detailed Description of the Proposed Parallel Mapping Algorithm}
\label{sec:AppendixMapAlg}
We give a detailed description of our parallel mapping algorithm.
Zeroth-order coordinate descent (\texttt{ZCD}) is used as an example.
In line 4, we first derive and implement the optimal theoretical singular values and initialize $\bm{\Phi}^U$ and $\bm{\Phi}^V$ using the decomposed values.
In lines 8-13, we use \texttt{ZCD} to alternately optimize phases in $\bm{U}$ and $\bm{V}^{\ast}$ under all non-ideal effects till convergence.
The step size is strictly bounded by the smallest phase control resolution.
Exponential decay is used to quickly reduce the learning rate to avoid divergence.
Note that cosine-annealing will not work since the ZO descent will rapidly converge given its greedy search nature.
Then at the end, due to the suboptimality in \texttt{ZCD}, we will perform \texttt{OSP} to find the current optimal singular values that minimize the mapping error given the trained $\bm{U}^T$ and $\bm{V}^{\ast,T}$.

\begin{algorithm2e}[H]
    \SetAlgoLined
    \SetKwInOut{Input}{Input}
    \SetKwInOut{Output}{Output}

    \Input{Mapping loss $\mathcal{L}^M$, mapping target $\bm{W}$, total iterations $T$, inner ZCD iterations $S$, step size decay factor $\beta$, ZCD step size upper bound $\delta\phi_{u}=\frac{2\pi}{2^{\min(b_{l},b)}-1}$, ZCD step size lower bound $\delta\phi_{l}=\frac{2\pi}{2^{\min(b_m,b)}-1}$}
    $\delta\phi=\delta\phi_u$\;
    \For{\text{Weight block} $\bm{W}_{pq}\sim\bm{W}$}{
    \text{Step 1: SVD and Parametrization via Eq.~\eqref{eq:Objective}}\;
    $\bm{U}_{pq}(\bm{\Phi}_{pq}^U), \bm{\Sigma}_{pq}(\bm{\Phi}_{pq}^S), \bm{V}^*_{pq}(\bm{\Phi}_{pq}^V)=\texttt{UP}\big(\texttt{SVD}(\bm{W}_{pq})\big)$\;
    \text{Step 2: ZCD on }$\bm{U}_{pq}, \bm{V}^*_{pq}$\;
        \For{$t\gets 0\cdots T-1$}{
            \For{$s\gets 0\cdots S-1$}{
                \text{Randomly sample a phase $\phi\sim\{\bm{\Phi}^U_{pq},\bm{\Phi}^V_{pq}\}$}\;
                \If{$\mathcal{L}^M_{pq}(\phi^{tS+s}+\delta\phi)<\mathcal{L}^M_{pq}(\phi^{tS+s})$}{
                $\phi^{tS+s+1}\gets\phi^{tS+s}+\delta\phi$\;
                }
                \Else{
                $\phi^{tS+s+1}\gets\phi^{tS+s}-\delta\phi$\;
                }
                $\delta\phi\gets\max(\delta\phi/\beta,\delta\phi_l)$\;
            }
        }
        Step 3: Optimal Projection on $\bm{\Sigma}_{pq}$\;
            $\Sigma_{pq}\gets\texttt{diag}(\tilde{\bm{I}}^{\ast}\bm{U}^*_{pq}\bm{W}_{pq}\bm{V}_{pq}\tilde{\bm{I}})$\;
    }
    \Output{Converged phases $\bm{\Phi}^{M}$}
    \caption{Parallel Mapping with \texttt{ZCD} and \texttt{OSP}}
    \label{alg:ParallelMapping}
\end{algorithm2e}

\section{Prove of Unbiased Gradient Approximation with Feedback and Feature Sampling}
\label{sec:AppendixUnbiasedGradient}

\begin{myclaim}
Considering the $l$-th layer with input $x\in \mathbb{R}^N$ and pre-activation $y\in \mathbb{R}^M$, we denote the blocking weight matrix as $\bm{W}=\{\bm{W}_{pq}\}_{p,q=1,1}^{P=\frac{M}{k},Q=\frac{N}{k}}$ and nonlinear activation as $\sigma$.
During backward, we randomly sample the feedback matrix $\bm{W}^T\in\mathbb{R}^{N\times M}$ with a structured sparse mask $\calP_{\bm{W}}=c_W(\calS_W\otimes\bm{1})$.
A similar sampling matrix $\calP_{x}$ is applied to input features.
The estimated gradients are unbiased, i.e., $\mathbb{E}[\big(\frac{\partial\mathcal{L}}{\partial\bm{\Sigma}}\big)_{\mathcal{S}}]=\frac{\partial\mathcal{L}}{\partial\bm{\Sigma}}$.

\end{myclaim}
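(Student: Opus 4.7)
The plan is to reduce unbiasedness to two ingredients: each structured mask has mean equal to the all-ones matrix under its prescribed scaling, and masks at different layers are mutually independent and independent of the forward state, so expectations distribute across the chain of products defining the backward pass. Concretely, since $\calS_W\in\{0,1\}^{Q\times P}$ is drawn with entry-wise inclusion probability $\alpha_W = \texttt{Tr}(\calS_W^T\calS_W)/(PQ)$, linearity gives $\mathbb{E}[\calS_W] = \alpha_W\bm{1}_{Q\times P}$, and therefore $\mathbb{E}[\calP_W] = c_W\alpha_W(\bm{1}_{Q\times P}\otimes\bm{1}_{k\times k}) = \bm{1}_{N\times M}$ exactly when $c_W = 1/\alpha_W$; the analogous computation yields $\mathbb{E}[\calP_x] = \bm{1}$. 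In the \texttt{btopk} variant the per-row top-$K$ selection fixes row sums but preserves each entry's marginal inclusion probability $\alpha_W$, so $\mathbb{E}[\calP_W]=\bm{1}$ still holds.

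Starting from Eq.~(5), I would write the sampled subspace gradient at layer $l$ as
\begin{equation*}
\Big(\pfrac{\calL}{\bm{\Sigma}^{(l)}}\Big)_{\mathcal{S}} = \big(\tilde{\bm{I}}\bm{U}^{(l),*}\bm{g}^{(l)}\big)\odot\big(\tilde{\bm{I}}\bm{V}^{(l),*}(\calP_x^{(l)}\odot x^{(l)})\big),
\end{equation*}
where $\bm{g}^{(l)}$ stands for the sampled upstream gradient $\widehat{\partial\calL/\partial y^{(l)}}$ obtained from the cascaded recursion $\bm{g}^{(\ell-1)} = \sigma'(y^{(\ell-1)})\odot(\calP_W^{(\ell)}\odot\bm{W}^{(\ell),T})\bm{g}^{(\ell)}$ for $\ell = L,\ldots,l+1$. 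I would then prove by downward induction on $\ell$ that $\mathbb{E}[\bm{g}^{(\ell)}] = \partial\calL/\partial y^{(\ell)}$: the activation derivative $\sigma'(y^{(\ell-1)})$ is forward-deterministic, so Hadamard product with it commutes with expectation, and the inductive step conditions on all masks at layers $>\ell$, uses independence of $\calP_W^{(\ell)}$ to factor $\mathbb{E}[(\calP_W^{(\ell)}\odot\bm{W}^{(\ell),T})\bm{g}^{(\ell)}] = \bm{W}^{(\ell),T}\mathbb{E}[\bm{g}^{(\ell)}]$ via $\mathbb{E}[\calP_W^{(\ell)}]=\bm{1}$, and then applies the tower property. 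Finally, independence of $\calP_x^{(l)}$ from $\bm{g}^{(l)}$ together with bilinearity of Eq.~(5) in its two Hadamard factors collapses the outer expectation to $\partial\calL/\partial\bm{\Sigma}^{(l)}$.

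The main obstacle is the careful bookkeeping of independence through the cascaded product, since the recursion is a repeated left-multiplication of random matrices: the identity $\mathbb{E}[AB] = \mathbb{E}[A]\mathbb{E}[B]$ must be peeled off layer by layer via conditioning rather than asserted in one step, which requires a clean per-layer mutual-independence assumption on the mask draws. A secondary subtlety is that $\calP_W$ interacts with $\bm{W}^{(\ell),T}$ via the Hadamard rather than matrix product; this is handled by the elementary identity $\mathbb{E}[(\calP_W\odot\bm{W}^{(\ell),T})v] = \bm{W}^{(\ell),T}v$, which is an immediate consequence of $\mathbb{E}[\calP_W]=\bm{1}$ together with linearity of expectation and independence of $\calP_W$ from $v$.
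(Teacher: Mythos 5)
Your proposal is correct and follows essentially the same route as the paper's proof: establish $\mathbb{E}[\calP]=\bm{1}$ from the choice $c_W=1/\alpha_W$, push the expectation through the cascaded feedback product to obtain an unbiased estimate of $\partial\calL/\partial y_l$, and then factor the outer expectation over the independent feature mask --- you merely make explicit the layer-by-layer conditioning and mask independence that the paper's two-line derivation leaves implicit. One small caveat: your aside that \texttt{btopk} preserves the entrywise marginal inclusion probability $\alpha_W$ is not something the paper asserts (it explicitly describes the guided top-$K$ sampling as trading bias for variance), so the unbiasedness claim should be read as applying to the uniform structured mask with expectation-maintained normalization rather than to the norm-guided variants.
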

\begin{proof}
Given $\mathbb{E}[\mathcal{P}]=\bm{1}$, we have
\begin{equation}
    \small
    \label{eq:UnbiasedWeight}
    \begin{aligned}
        \mathbb{E}[(\bm{W}_l^T)_{\calS_{\bm{W}_l}}]=&\mathbb{E}[\bm{W}_l^T\odot\calP_{\bm{W}_l}]=\bm{W}_l^T\\
        \mathbb{E}[(\bm{x}_l^T)_{\calS_{\bm{x}_l}}]=&\mathbb{E}[\bm{x}_l^T\odot\calP_{\bm{x}_l}]=\bm{x}_l^T.
    \end{aligned}
\end{equation}
Then we can derive
\begin{equation}
    \small
    \label{eq:UnbiasedGrad}
    \begin{aligned}
        \mathbb{E}[\big(\pfrac{\calL}{ y_l}\big)_{\calS_{\bm{W}_l}}]=&\mathbb{E}\Big[\sigma_l'\prod_{i=l+1}^{L-1}((\bm{W}_i^T)_{\calS_{\bm{W}_l}}\odot\sigma_i')(\bm{W}_L^T)_{\calS_{\bm{W}_l}}\pfrac{\calL}{y_L}\Big]
        =\pfrac{\calL}{y_l}\\
        \mathbb{E}\big[\big(\pfrac{ \calL}{{\bm{\Sigma}}_l}\big)_{\mathcal{S}}\big]=&\mathbb{E}\Big[\bm{U}^{\ast}\big(\pfrac{\calL}{ y_l}\big)_{\calS_{\bm{W}_l}}(x_l^T)_{\calS_{\bm{x}_l}}\bm{V}\Big]
        =\pfrac{\calL}{\bm{\Sigma}_l}.
    \end{aligned}
\end{equation}
\end{proof}

\section{Training Details}
\label{sec:AppendixTrainDetails}
We implement ONN simulation, all models, and training logic in PyTorch 1.8.1.
All experiments are conducted on a machine with an Intel Core i7-9700 CPU and an NVIDIA Quadro RTX 6000 GPU.
For identity calibration, we set the epoch to 400 with an initial learning rate of 0.1, a decay rate of 0.99, and a phase resolution of 8 bit.
For parallel mapping, we set the epoch to 300 with an initial learning rate of 0.1, a decay rate of 0.99, and a phase resolution of 8 bit.
For subspace learning, we adopt AdamW as the optimizer with a learning rate of 0.002 and a weight decay rate of 0.01 for subspace learning from scratch.
Epochs are set to 100 for MNIST, FashionMNIST training, 200 for CIFAR-10/100, and TinyImageNet.
For subspace learning after mapping, we reduce the epoch to 20 and the learning rate to 0.0002.
We use cosine-annealing as the learning rate scheduler.
When compared with prior on-chip learning protocols, we adopt the recommended settings for \texttt{FLOPS} and \texttt{MixedTrn} in~\cite{NP_DATE2020_Gu, NP_AAAI2021_Gu}.
For \texttt{FLOPS}, the total epochs are set to 50, the initial learning rate is 2, and the gradient samples are set to 5.
For \texttt{MixedTrn}, we train for 20 epochs, the mixed-training sparsity is set to 0.4, the parameter sparsity is set to 0.1, and the initial learning rate is set to 0.02.
When compared with prior sampling methods, we apply uniform spatial sampling with expectation-maintained normalization for \texttt{RAD}~\cite{NN_ICLR2021_Oktay}.
For \texttt{SWAT-U}~\cite{NN_NeurIPS2020_Aamir}, we apply uniform spatial feature sampling without normalization and uniform weight matrix sampling with expectation-maintained normalization.
Since we only perform efficient training, we turn off any sampling in inference.

\section{MZI Array Scaling}
\label{sec:AppendixScaling}
A single MZI array has a limited size due to its high area cost, e.g., up to 32 or 64.
However, this is not an issue for our framework.
Multi-core systems with small subarrays are trends for analog computing, which is the design concept of our accelerator in Figure~\ref{fig:ONNTrainArch}.
Multiple PTCs are interconnected to support a large tensor computation in parallel. Therefore, our system’s performance will not be limited by the scale of a single PTC.
Actually, partitioning a large tensor operation into small chunks is widely adopted and recently considered as a better solution than large array sizes due to noise robustness consideration.

We adopt 9$\times$9 blocks based on the following considerations.

\noindent\textbf{Hardware practicality.}~ The largest commercial demonstration of optical neural chips is 32$\times$32 so far.
9$\times$9 is a practical, robust, and efficient setting according to recent experimental demonstrations.

\noindent\textbf{Robustness.}~ Larger MZI arrays will cause severe phase error accumulation effects.
Cascaded phase %
\begin{wrapfigure}[7]{rH}{0.52\textwidth}
\begin{minipage}{0.52\textwidth}
\resizebox{\textwidth}{!}{
\begin{tabular}{c|cccccc}
\toprule
Blk size  & 8     & 9     & 12    & 16    & 24    & 32    \\ \midrule
Rel. Err. & 0.025 & 0.032 & 0.043 & 0.061 & 0.094 & 0.126 \\
std.      & 2e-4  & 3e-4  & 3e-4  & 5e-4  & 9e-4  & 1e-3  \\ \bottomrule
\end{tabular}
}
\captionof{table}{Relative matrix error with different MZI array sizes.}
\end{minipage}
\end{wrapfigure}
error will cause non-trivial fidelity and robustness issues as block size increases.
9$\times$9 is generally a robust design configuration when cascaded noises are still tolerable.
Here we show a table of noise-induced errors (relative matrix distance) with various block sizes on a 256$\times$256 weight matrix.
Std. is calculated based on 20 runs.
Phase shifter gamma noise std=0.002, crosstalk factor=0.005, quantization bitwdith=8-bit.
We observe large array sizes are noise-sensitive in general.

\noindent\textbf{ZOO Convergence.}~
IC and PM are zeroth-order optimization techniques.
Each block indicates an optimization instance.
A larger block size  will have negative impacts on the optimization convergence %
\begin{wrapfigure}[5]{rH}{0.55\textwidth}
\begin{minipage}{0.55\textwidth}
\resizebox{\textwidth}{!}{
\begin{tabular}{c|cccccc}
\toprule
Blk size  & 8     & 9     & 12    & 16    & 24    & 32    \\ \midrule
($MSE^U+MSE^V$)/2 & 0.0135 & 0.013 & 0.03 & 0.039 & 0.04 & 0.045 \\ \bottomrule
\end{tabular}
}
\captionof{table}{IC optimality with different array sizes.}
\end{minipage}
\end{wrapfigure}
and solution optimality, which is the intrinsic limitation of most zeroth-order optimizers.
In the IC procedure, for relatively large block sizes, our ZO optimizers, unfortunately, will have solution quality degradation due to the curse of dimensionality and efficiency degradation due to low parallelism.
Here we show how solution quality in identity calibration changes with various block sizes.
9$\times$9 block is a good selection with high solution quality.

\noindent\textbf{Parameter Space.}~
Subspace learning only optimizes the singular values while $U$ and $V$ are fixed.
For an $N\times N$ weight matrix with $k\times k$ blocks, only $N^2/k$ singular values are trainable.
Increasing %
\begin{wrapfigure}[7]{rH}{0.55\textwidth}
\begin{minipage}{0.55\textwidth}
\resizebox{\textwidth}{!}{
\begin{tabular}{c|cccccc}
\toprule
Blk size  & 8     & 9     & 12    & 16    & 24    & 32    \\ \midrule
Accuracy & 84.26 & 84.45 & 83.36 & 81.27 & 80.68 & 78.40 \\ \bottomrule
\end{tabular}
}
\captionof{table}{Subspace learning accuracy with different block sizes.}
\end{minipage}
\end{wrapfigure}
the block size $k$ will decrease the parameter space.
According to the experience from the field of structured/subspace neural networks, e.g., block-circulant neural nets, the block size is typically set to a number around 8.
Here we add new results on \name-SL ($\alpha_W$=$\alpha_C$=0.6, $\alpha_D$=0.5) CIFAR-10 VGG8 with various block sizes.
According to our experiments below, 16$\times$16 blocks already show inadequate trainability due to overly small parameter space, leading to a clear accuracy drop.
In conclusion, we recommend using multiple interconnected 9$\times$9 PTCs for parallel computing, since this choice of 9$\times$9 block balances both systematic performance, hardware complexity, robustness, and on-chip trainability.

\section{Hardware Cost Evaluation}
\label{sec:AppendixCost}
\subsection{PTC Energy Estimation}
For simplicity, we count the number of PTC calls as the indicator to the total energy estimation of the PTC cluster.
For example, we focus on a 2-D convolutional layer with kernel shape of $C_{out}\times C_{in} \times K\times K$, input feature size $B\times C_{in} \times H\times W$ output feature size of $B\times C_{out} \times H'\times W'$.
We partition the unfolded weight matrix into $P\times Q$ blocks with size of $k \times k$ and assign each to a PTC.
We have $P=\ceil{\frac{C_{out}}{k}}$ and $Q=\ceil{\frac{C_{in}\times K^2}{k}}$.
Each PTC can utilize $k$ wavelengths to achieve parallel processing.
Now we give detailed computation of energy breakdown per optimization iteration.
\begin{equation}
    \small
    \label{eq:EnergyPerStep}
    \begin{aligned}
    \textbf{Forward Energy}&=C_{out}C_{in}K^2BH'W'\\
    \textbf{Backward Weight Energy}&=2\text{Tr}(\calS_C^T\calS_C)BPQ\\
    \textbf{Backward Input Energy}&=\text{Tr}(\calS_W^T\calS_W)BHW.
    \end{aligned}
\end{equation}
Note that in backward weight energy, we double the PTC call since the \emph{in-situ} subspace gradient acquisition requires 2 PTC calls.

\subsection{Total Time Step Estimation}
We assign $k$ electrical adders for each PTC to implement sequential cross-PTC reduction and parallel local accumulation.
Each PTC call counts as one step, each partial product/gradient accumulation stage counts as one step, and the Hadamard multiplication in gradient computation also counts as one step.
Given this assumption, we derive the time step as,
\begin{equation}
    \small
    \label{eq:TimestepPerStep}
    \begin{aligned}
    \textbf{Forward Step}&=(Q-1)_+BH'W'+\ceil{\frac{BH'W'}{k}}\\
    \textbf{Backward Weight Step}&=4\text{Tr}(\calS_C^T\calS_C)B\\
    \textbf{Backward Input Step}&=\left\{
    \begin{aligned}
        \ceil{\frac{C_{in}}{P}}\ceil{\log_2{2k}}\ceil{\frac{1}{2}\max_q\Big(\big(\sum\calS_W(q,:)-1\big)_+\Big)}BHW,&\quad K>1,~\text{stride}<K\\
        \max_q\Big(\big(\sum\calS_W(q,:)-1\big)_+\Big)BH'W',&\quad K=1
    \end{aligned}\right.
    \end{aligned}
\end{equation}

\subsection{WDM Dispersion Discussion}
\label{sec:AppendixWDMDIspersion}

Theoretically, coherent photonic circuits will have slightly different phase responses to different working wavelengths.
However, we claim that this frequency-specific phase shift has minimum impacts on our learning procedure.

\noindent\textbf{Negligible Dispersion.}~ Our PTC core is intentionally designed to have a small-scale, i.e., 9$\times$9. Hence we require 9 wavelengths in our framework.
This avoids too many wavelengths being used.
Therefore, the spectrum range will be relatively small.
Conservatively we assume 8 nm between the furthest two wavelengths.
Based on the phase response equation, $\Delta\phi(\lambda)=2\pi n_{eff}(\lambda)L/\lambda$, this leads to a maximum 1-2\% phase difference for the furthest two wavelengths.
On a small MZI array, this phase difference will only cause negligible transfer function drift.
We simulate this effect when the weight block size is set to 9$\times$9 and inject 1-2\% dispersion-induced MZI phase response drift;
the transfer matrix has ~0.5\% relative error and ~0.5\% mean square error.
Compared with the gradient approximation error caused by our three-level sparse sampling, phase variation, and thermal crosstalk, shown in Fig. 8, this slight drift caused by WDM dispersion is negligible.

\noindent\textbf{High Non-ideality Tolerance.}~ Our experiments show that first-order subspace learning is very robust to all these gradient approximation errors.
With all the above non-ideality, the approximated gradient directions are still well-aligned with the true gradients.
The on-chip learning procedure works as expected even when WDM dispersion effects are considered.
This effect can be considered in-situ when using WDM on MZI array training, therefore, the model can tolerate this non-ideal effect without inference accuracy degradation.

\noindent\textbf{Dispersion-free Devices.}~~In the literature, there are WDM dispersion-free MZI devices being proposed~\cite{NP_JLT2015_Dupuis}.
Within the 45nm range, the coefficient of phase shifters can be maintained.
Thus, the phase response to 9 different wavelengths can be compensated to almost the same response.
This further shows that WDM dispersion is not a major concern for our assumed ONN architecture and proposed training flow.

\end{document}